% SIAM SIURO Article Template
\documentclass[final,onefignum,onetabnum]{siuro210301}

\usepackage{lipsum}
\usepackage{amsfonts}
\usepackage{graphicx}
\usepackage{algorithmic}
\usepackage{array} % Optional, for better table formatting
\usepackage{float} % For the [H] specifier
\usepackage{caption} % For captions with tables
\usepackage{colortbl} % For coloring cells or columns
\usepackage{mathtools}
\usepackage{comment}
\usepackage{multirow}  % For multirow support
\usepackage{graphicx}  % For rotating text

\ifpdf
  \DeclareGraphicsExtensions{.eps,.pdf,.png,.jpg}
\else
  \DeclareGraphicsExtensions{.eps}
\fi

% Prevent itemized lists from running into the left margin inside theorems and proofs
\usepackage{enumitem}
\setlist[enumerate]{leftmargin=.5in}
\setlist[itemize]{leftmargin=.5in}

% Add a serial/Oxford comma by default.

% Used for creating new theorem and remark environments
\newsiamremark{remark}{Remark}
\newsiamremark{hypothesis}{Hypothesis}
\crefname{hypothesis}{Hypothesis}{Hypotheses}
\newsiamthm{claim}{Claim}

% Since SIURO doesn't have a "Received by the editors..." line, we need to advance the
% page 1 footnote symbol count
\setcounter{footnote}{1}

% Sets running headers
\headers{Constructing the Ideal Building Blocks Set}{R. Salazar}

% Title.
\title{Letters, Colors, and Words: Constructing the Ideal Building Blocks Set.}

% Authors: full names plus addresses.
\author{Ricardo Esteban Salazar Ordoñez\thanks{Lake Forest College., Lake Forest, IL 
  (\email{salazarordon@lakeforest.edu}).} }

% Use one of the following formats to list the project advisor. Including the project
% advisor's affiliation and/or contact information is optional.
\dedication{\small\textit{Project advisor: Shahrzad Jamshidi\thanks{Lake Forest College, Lake Forest, IL (\email{sjamshidi@lakeforest.edu}).}}}

\usepackage{amsopn}

% Optional PDF information
\ifpdf
\hypersetup{
  pdftitle={Letters, Colors, and Words: Constructing the Ideal Building Blocks Set.},
  pdfauthor={R. Salazar}
}
\fi

\begin{document}

\maketitle

% REQUIRED
\begin{abstract}
Define {\bf a building blocks set} to be a collection of $n$ cubes (each with six sides) where each side is assigned one letter and one color from a palette of $m$ colors. We propose a novel problem of assigning letters and colors to each face so as to maximize the number of words one can spell from a chosen dataset that are either {\bf mono words}, all letters have the same color, or {\bf rainbow words}, all letters have unique colors. We explore this problem considering a chosen set of English words, up to six letters long, from a typical vocabulary of a US American 14 year old and explore the problem when $n=6$ and $m=6$, with the added restriction that each color appears exactly once on the cube. The problem is intractable, as the size of the solution space makes a brute force approach computationally infeasible. Therefore we aim to solve this problem using random search, simulated annealing, two distinct tree search approaches (greedy and best-first), and a genetic algorithm. To address this, we explore a range of optimization techniques: random search, simulated annealing, two distinct tree search methods (greedy and best-first), and a genetic algorithm. Additionally, we attempted to implement a reinforcement learning approach; however, the model failed to converge to viable solutions within the problem's constraints. Among these methods, the genetic algorithm delivered the best performance, achieving a total of 2846 mono and rainbow words.\\
\textbf{Keywords:} Permutations, Simulated Annealing, Tree-search, genetic-algorithm.

\end{abstract}

\section{Introduction}\label{sec:intro}
Building blocks are an iconic toy designed to be employed in block play, whose learning outcomes (if employed since early ages) range from development of motor, and classification, to mathematical and problem-solving skills \cite{blockPlay}. Our notion of a building blocks set extends this concept focusing on the integration language learning with color recognition to create a constrained problem for children: what words can you spell that are uniform in color ({\bf mono words}) or consisting of all distinct colors ({\bf rainbow words}). Formally, we define a {\bf building block set} to correspond to $n$ cubes. For each face of the cube, we assign one letter and one of color from a specified palette. In this paper, we focus on the case where $n=6$ and suppose our palette consists of 6 colors. In addition, we restrict the coloring so that each face of the cube has a unique color. \cref{fig:1} shows the template for one of the described cubes.

\begin{figure}[htbp]
  \centering
  \includegraphics{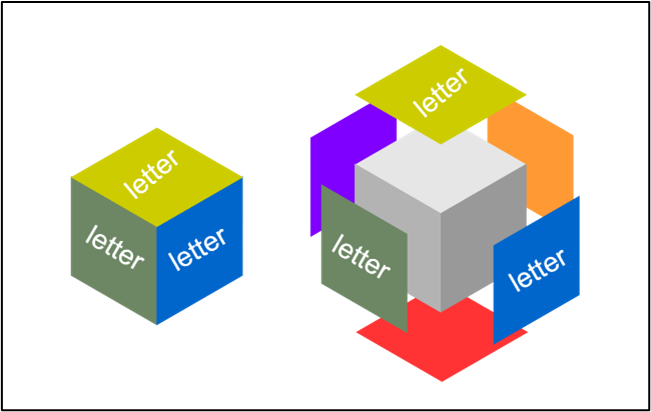}
  \caption{Cube colouring template.}
  \label{fig:1}
\end{figure}

We intend to find a set of cubes that can be used as a spelling game where children aim to find as many mono and rainbow English words as they can. See \cref{fig:2} for an example of each type of word.

\begin{figure}[htbp]
  \centering
  \includegraphics[width=\textwidth]{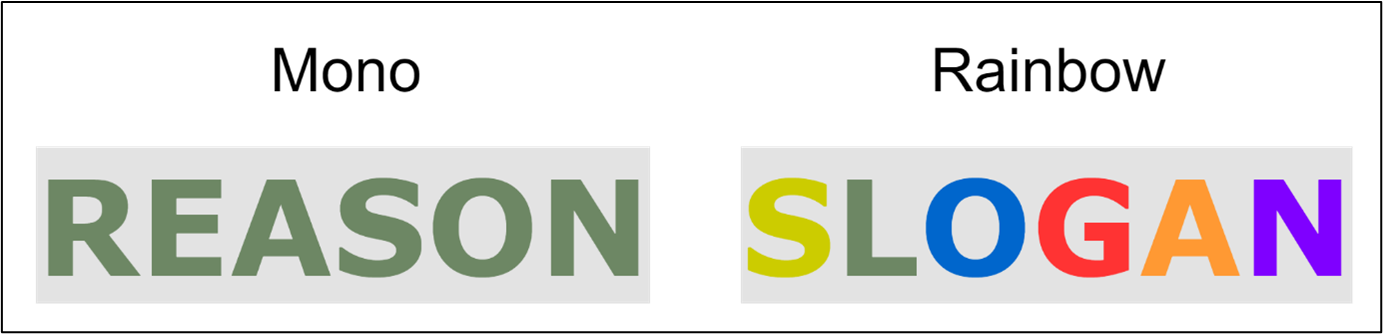}
  \caption{Example of Mono and Rainbow words.}
  \label{fig:2}
\end{figure}

In \cref{sec:set_up} we reduced the original dataset to only words of up to six letters, represented our cube set as a 36-item array, and decided what letters would be repeated after we make sure that the 26 letters of the alphabet are included.
In \cref{subsec:RS} we look at random search, method that after 1.3 million permutations its best solution could spell 1881 words. Consequently in \cref{subsec:SA} we implemented simulated annealing as a way to make our search more structured, which allowed us to land to a solution of 2352 words after 250 thousand iterations. In \cref{subsec:tree} we constrained the search a bit more by structuring a tree and first implementing a constrained greedy algorithm, which yielded a solution with a total of 2268 words. We then implemented best-first search algorithm which found a permutation that allows for 2323 words. Then tried a regular greedy algorithm, using this method, we were able to find a permutation with a rating of 2386 words. In \cref{subsec:RL} we attempted to create an RL agent that would navigate the tree structure (same set up as the previous section), however the the training never converged so we decided to move on and in \cref{subsec:genetic} where we try to find a solution through a genetic algorithm which allowed us to find a permutation with a total of 2846 words.

\section{Set up}\label{sec:set_up}

\subsection{Data set}\label{subsec:data_set}

Given that we are focusing on making a puzzle toy to spell English words with children as target users. Our data set consists of words ranked by Age-of-Acquisition from zero to 14 years old. The original list of words had approximately 44 thousand words \cite{AoA} , between duplicates and acronyms. For example, `stock' appeared 14 times to account for the different meanings. These were removed. However, since we have only six cubes, we restricted the dataset to words that are 6 characters long or less, leaving us with a total of 9624 words.  

\subsection{One-dimensional representation of a cube set}\label{subsec:1D}
Since the orientation of the faces on the cube is unimportant, we represent a single cube as a six-element array of letters where the indices act as labels for the six colors (see \cref{fig:3}).The six block set is then represented as a 36-item array where the color label is computed using modular arithmetic. For example, the color at index 11 $\coloneq  11 \mod 6$ (see \cref{fig:4}). Additionally, the cube label can be computed by using the floor function. For example, the letter at index 32 belongs to the cube $\lfloor \dfrac{32}{6} \rfloor$ (see \cref{fig:4})
\begin{figure}[htbp]
  \centering
  \includegraphics[width=\textwidth]{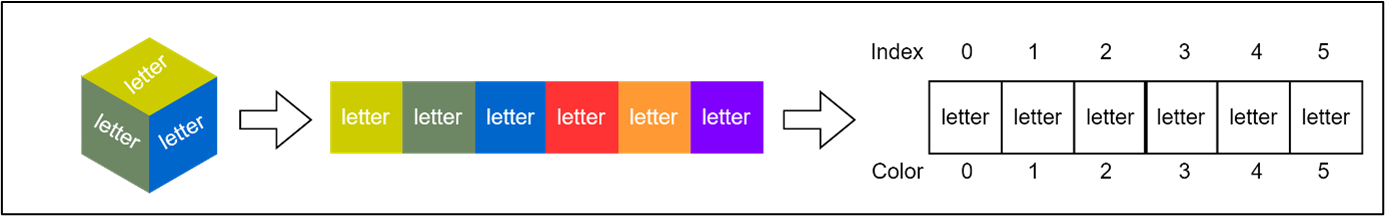}
  \caption{Representation of a cube in one dimension.}
  \label{fig:3}
  \includegraphics[width=\textwidth]{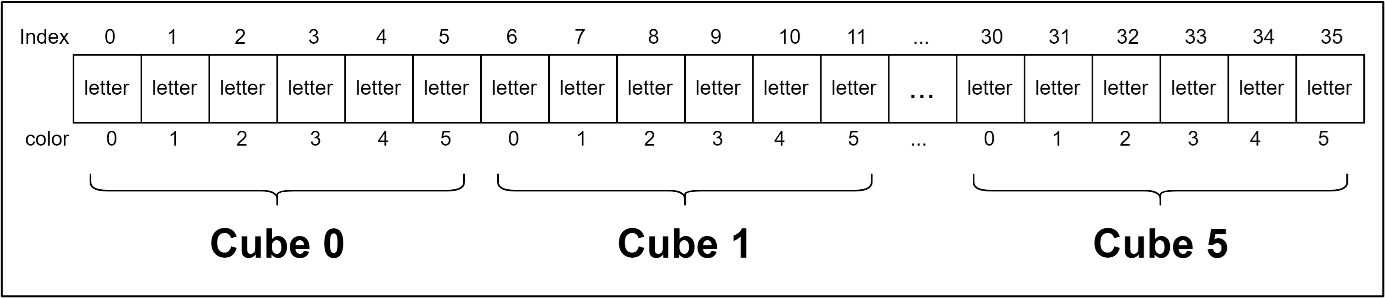}
  \caption{Representation of a six-block set in one dimension.}
  \label{fig:4}
\end{figure}

\subsection{Letter repetition}\label{subsec:letter_rep}
As we want to make sure to include each letter of the English alphabet at least once, we have ten vacant spots in our cube-set. This remainder could be distributed by assigning more repetitions to the most frequent letter. That is why we calculated the each letter frequency relative to our data set. \cref{tab:letter_freq} shows the letters sorted by frequency, we decided to round the frequency to the thousandth place so that every frequency is non-zero. We use this value to calculate the percentage of the remainder (rounded) that should be allocated to a given letter, starting from the most frequent until the corresponding portion of a letter is equal to zero, then this process is repeated using an updated number of reminder spots, until all the spots had been taken, or the corresponding portion for the most frequent letter is zero, in which case we simply assign whatever number of spots are left, to it. In other words, every letter that has a relative frequency $\geq 0.05$ gets assigned an extra repetition. Then the new number of vacant spots is 1, and we simply allocate it to letter `e', thus completing the 36 letters in our six-cube set. \cref{tab:letter_rep} shows the number of repetitions for our set (the highlighted column), we also include the letter repetitions that other cube sets may have if we use the same strategy for distributing the vacant spots. Using this information we created an array of 36-letters (see \cref{tab:base}) that yields a word count of 960, and we label as `The base permutation', since we will use it as a starting point whenever we create a random permutation. \cref{fig:5} illustrates the full one-dimensional representation of a cube set that uses `The base permutation'.

\begin{table}[htbp]
    \centering
    \begin{minipage}{0.45\textwidth}
        \centering
        \caption{Letters sorted by frequency}\label{tab:letter_freq}
        \begin{tabular}{|c|c|}
            \hline
            Letter & Relative frequency\\
            \hline
            e & 0.115 \\
        		a & 0.086 \\
        		r & 0.073 \\
        		o & 0.066 \\
        		t & 0.063 \\
        		i & 0.061 \\
        		s & 0.059 \\
        		l & 0.059 \\
        		n & 0.054 \\
        		u & 0.040 \\
        		d & 0.037 \\
        		c & 0.037 \\
        		p & 0.033 \\
        		m & 0.031 \\
       		h & 0.029 \\
        		y & 0.027 \\
        		g & 0.027 \\
        		b & 0.026 \\
        		f & 0.019 \\
        		w & 0.016 \\
        		k & 0.016 \\
        		v & 0.012 \\
        		z & 0.004 \\
        		x & 0.004 \\
        		j & 0.003 \\
        		q & 0.002 \\
            \hline
        \end{tabular}
    \end{minipage}%
    \hfill % Optional, adds space between tables
    \begin{minipage}{0.45\textwidth}
        \centering
        \caption{Letter Repetitions}\label{tab:letter_rep}
        \begin{tabular}{|c|c|c|c|c|c|c|}
        \cline{2-7}
        		  \multicolumn{1}{c|}{}& \multicolumn{6}{c|}{Number of cubes} \\
        \hline
        		Letter & 6 & 10 & 15 & 20 & 25 & 30 \\
        \hline
        e & \cellcolor{pink}3 & 6 & 9 & 12 & 16 & 20 \\
        a & \cellcolor{pink}2 & 4 & 6 & 9 & 13 & 14 \\
        r & \cellcolor{pink}2 & 3 & 6 & 8 & 10 & 12 \\
        o & \cellcolor{pink}2 & 3 & 5 & 7 & 9 & 11 \\
        t & \cellcolor{pink}2 & 3 & 5 & 7 & 9 & 11 \\
        i & \cellcolor{pink}2 & 3 & 5 & 7 & 9 & 10 \\
        s & \cellcolor{pink}2 & 3 & 5 & 7 & 8 & 10 \\
        l & \cellcolor{pink}2 & 3 & 5 & 7 & 8 & 10 \\
        n & \cellcolor{pink}2 & 3 & 4 & 6 & 8 & 9 \\
        u & \cellcolor{pink}1 & 2 & 4 & 5 & 6 & 7 \\
        d & \cellcolor{pink}1 & 2 & 3 & 5 & 6 & 7 \\
        c & \cellcolor{pink}1 & 2 & 3 & 4 & 6 & 7 \\
        p & \cellcolor{pink}1 & 2 & 3 & 4 & 5 & 6 \\
        m & \cellcolor{pink}1 & 2 & 3 & 4 & 5 & 6 \\
        h & \cellcolor{pink}1 & 2 & 3 & 4 & 5 & 5 \\
        y & \cellcolor{pink}1 & 2 & 3 & 4 & 4 & 5 \\
        g & \cellcolor{pink}1 & 2 & 3 & 4 & 4 & 5 \\
        b & \cellcolor{pink}1 & 2 & 3 & 3 & 4 & 5 \\
        f & \cellcolor{pink}1 & 2 & 2 & 3 & 3 & 4 \\
        w & \cellcolor{pink}1 & 2 & 2 & 3 & 3 & 3 \\
        k & \cellcolor{pink}1 & 2 & 2 & 1 & 3 & 3 \\
        v & \cellcolor{pink}1 & 1 & 2 & 1 & 2 & 3 \\
        z & \cellcolor{pink}1 & 1 & 1 & 1 & 1 & 2 \\
        x & \cellcolor{pink}1 & 1 & 1 & 1 & 1 & 2 \\
        j & \cellcolor{pink}1 & 1 & 1 & 1 & 1 & 2 \\
        q & \cellcolor{pink}1 & 1 & 1 & 1 & 1 & 1 \\
        \hline
    \end{tabular}
    \end{minipage}
\end{table}

\begin{table}[htbp]
    \centering
    \caption{Base permutation}\label{tab:base}
    \scriptsize % Makes the content of the table smaller
    \setlength{\tabcolsep}{3pt} % Reduces the horizontal padding in table cells
    \begin{tabular}{|*{36}{c|}}
        \hline
        E&E& E & A & A & R & R & O & O & T & T & I & I & S & S & L & L & N & N & U & D & C & P & M & H & Y & G & B & F & W & K & V & Z & X & J &Q \\
        \hline
    \end{tabular}
\end{table}

\begin{figure}[htbp]
  \centering
  \includegraphics[width=\textwidth]{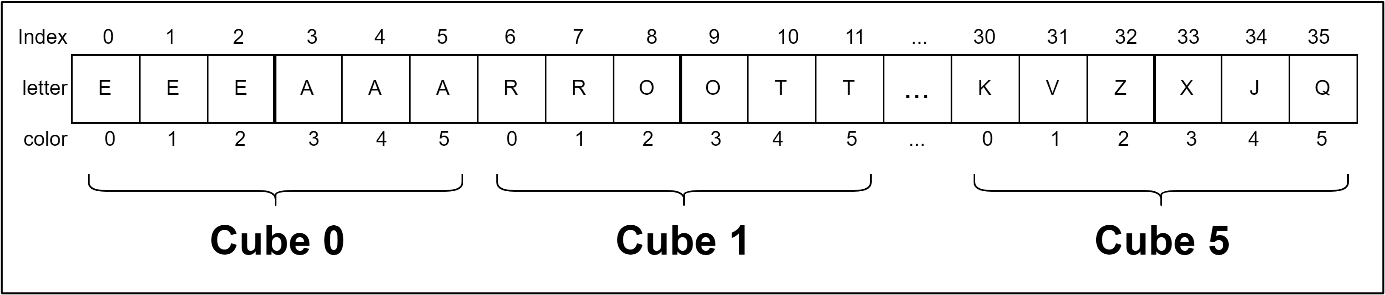}
  \caption{Base cube set.}
  \label{fig:5}
\end{figure}

\clearpage

\subsection{Search space}\label{subsec:search_space}
Given that our cube-set includes one letter repeated three times, and eight other repeated twice, the number of possible permutations is a follows:
\begin{equation}
	\dfrac{P_{36}^{36}}{3! (2!)^8} = 2.42183155*10^{38} \nonumber
\end{equation}

It takes approximately 1.07 seconds to count how many words a specific configuration can spell, so an exhaustive search will take about $6*10^{30} $ years. The problem is, therefore, intractable.

\section{Search algorithms}\label{sec:SA}
\subsection{Random Search}\label{subsec:RS}
To establish a baseline for comparison, we implemented a random search by shuffling the base permutation. We produced a total of 1.3 million random permutations and kept track of three maximization targets: mono words, rainbow words, and the sum of both. \cref{tab:RS_results} shows the results of the search showing the sum of mono and rainbow as `word count'. In \cref{app:RS} you can see the 10 best permutations found for each maximization target.

\begin{table}[htbp]
    \centering
    \caption{Results from Random search}\label{tab:RS_results}
    \begin{tabular}{|l|c|}
        \hline
        Maximization target & Word count \\
        \hline
        Mono  & 1569\\
        \hline
        Rainbow  & 1813\\
        \hline
        Sum  & 1881\\
        \hline
    \end{tabular}
\end{table}

\clearpage
\subsection{Simulated Annealing}\label{subsec:SA}
When thinking about algorithms that leverage the power of randomness, simulated annealing usually comes to mind. This search requires us to treat our problem as a graph, where we decide whether or not to move from one state to another (randomly chosen) neighbouring state if it's better than the current or based on a probability. Since the algorithm attempts to emulate a process of metallurgy, that probability is in function of a sense of high energy or `temperature' that will decrease slowly as time passes. So, the algorithm will initially be more likely to visit a state that performs worse than the current, during early stages in comparison to later stages. This is the way in which simulated annealing deals with the balance between exploration and exploitation \cite{simAnn}. In our case, a neighbour will be the result of swapping two items chosen at random. We ran this algorithm for 250 thousand iterations, with an initial `temperature' of 1000 and a cooling rate or 0.999. And the results are seen in \cref{tab:SA_results}.

\begin{table}[htbp]
    \centering
    \caption{Results from simulated annealing}\label{tab:SA_results}
    \begin{tabular}{|l|c|}
        \hline
        Maximization target & Word count \\
        \hline
        Mono  & 1497\\
        \hline
        Rainbow  & 2243\\
        \hline
        Sum  & 2352\\
        \hline
    \end{tabular}
\end{table}

\subsection{N-ary tree search algorithms}\label{subsec:tree}
In the interest of conducting a controlled exploration of the search space, we can create a uniform N-ary tree, where, every child node is the result of making a small modification in the parent. We found it suitable to make a 180-ary tree, so that the current node (parent) is connected to all unique face-swaps within a cube, for every cube ($6\binom{6}{2}$) along with all unique face-swaps for the letters of the same color, for every color ($6\binom{6}{2}$). For illustrative purposes, \cref{fig:6} shows an example of the fifteen unique element swaps within a cube, and \cref{fig:7} shows an example of the fifteen unique swaps within a color.

\begin{figure}[htpb]
  \centering
  \includegraphics{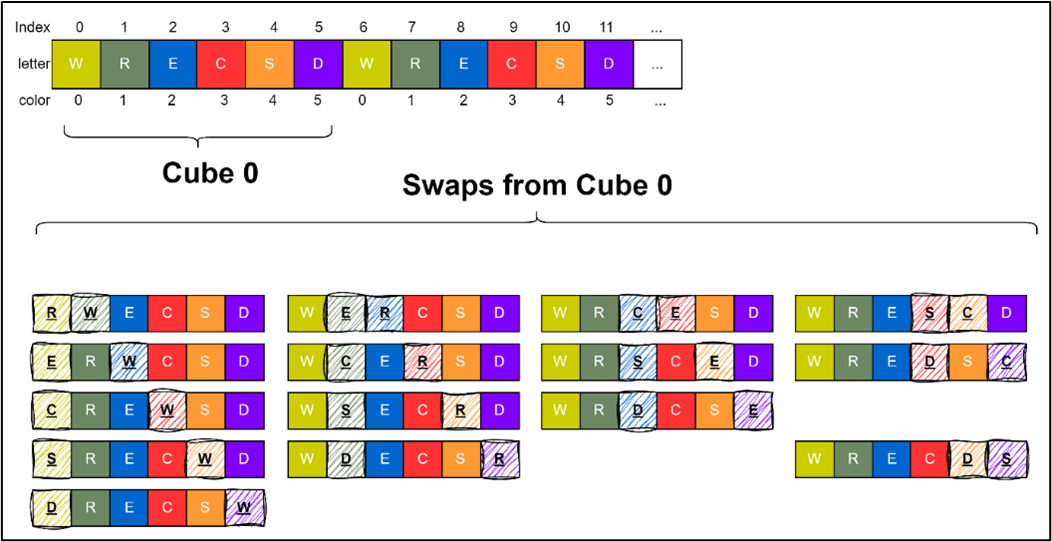}
  \caption{Example of swaps within a cube.}
  \label{fig:6}
  \includegraphics{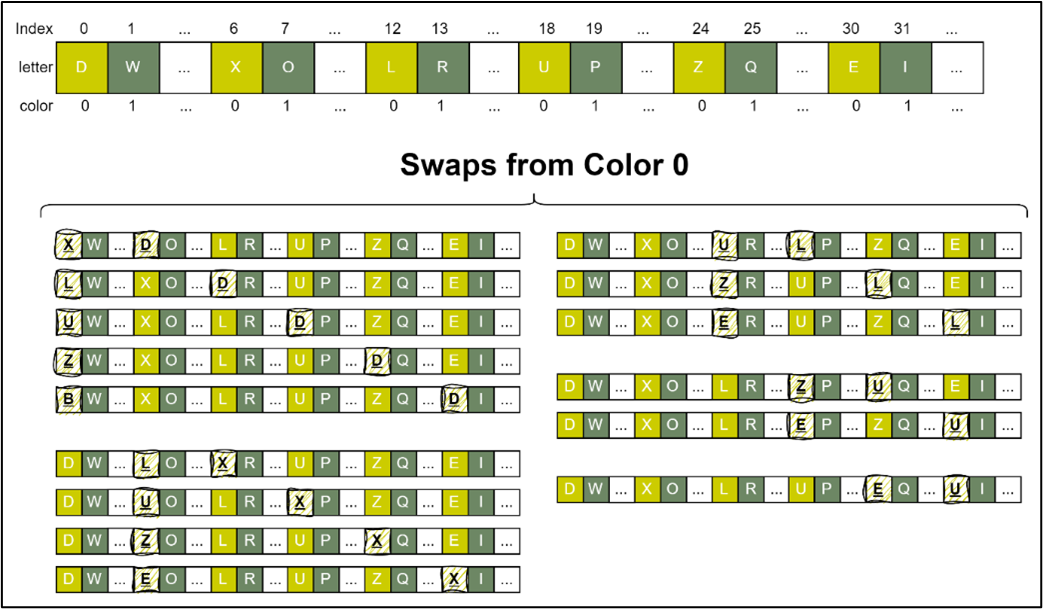}
  \caption{Example of swaps within a given color.}
  \label{fig:7}
\end{figure}

Although this method of creating child permutations allows for cyclic references and redundant states---due to some trivial swaps and the fact that some changes can be quickly reverted in just two generations---by keeping track of all visited permutations and excluding them as possible parents, we end up with a tree structure that constrains the search space enough to make greedy-like approaches computationally manageable, while simultaneously ensuring that any permutation in the search space is still theoretically reachable (see \cref{app:proof}).

It is important to note that because we explore this 180-ary tree using three greedy algorithms, we will only be able to find local maxima near the starting point. So, our success in finding a satisfactory result hinges on the selection of a diverse set of roots. We decided that we will test each case with three different roots: `Base' permutation, a random permutation ( as seen in \cref{tab:seed2K}) generated using a seed = 2000 which we label `Seed2K' and yields a word count of 1550, and, a third permutation, sourced from the results of \cref{subsec:RS}(E.g when maximizing mono, the root used will be the best permutation found through random search when we maximized mono words). 

\begin{table}[htbp]
    \centering
    \caption{Seed2K permutation}\label{tab:seed2K}
    \scriptsize % Makes the content of the table smaller
    \setlength{\tabcolsep}{3pt} % Reduces the horizontal padding in table cells
    \begin{tabular}{|*{36}{c|}}
        \hline
        E&S&D&A&U&O&Y&G&T&J&C&R&R&L&H&B&F&E&S&I&K&V&E&I&T&Q&N&X&W&M&L&A&Z&P&N&O\\
        \hline
    \end{tabular}
\end{table}
\clearpage

\subsubsection{Constrained greedy search}\label{subsec:CGA}
This algorithm will choose the parent of the next level by selecting a permutation that ranks equal or better than its precursor, if such a permutation exists, otherwise it will stop the search. Because of this constrain the last permutation to become a parent before the search stops, will be the best solution. We executed this algorithm with a budget of 130 thousand unique permutations per tree, and the results are shown in \cref{tab:CGA_results}.

\begin{table}[H]
    \centering
    \caption{Results from Greedy Algorithm}\label{tab:CGA_results}
    \begin{tabular}{|l|l|l|l|}
        \hline
        Target & Root & Word count & Found at \\
        \hline
        Mono  & Base 				  & 1509 & G484-P82115 \\
              & Seed2K	  & 1546 & G88-P15398 \\
              & Mono from random search & 1531 & G39-P6916 \\
        \hline
        Rainbow & \cellcolor{pink}Base & \cellcolor{pink}2268 & \cellcolor{pink}G25-P4444 \\
                & Seed2K   		& 2167 & G18-P3303 \\
                & Rainbow from random search & 2074 & G15-P2717 \\
        \hline
        Sum   & Base					  & 2116 & G16-P2888 \\
              & Seed2K	  & 2199 & G31-P5596 \\
              & Sum from random search & 2074 & G15-P2717 \\
        \hline
    \end{tabular}
\end{table}

The column `Found at' indicates us in which generation (E.g G15 is the 15th generation) and after how many unique permutations was the solution found (E.g P2717 means 2717th permutation). If we look carefully we can appreciate that several of the trees in \cref{tab:CGA_results} end up rather quickly, way before reaching the 130 thousand permutations.

\subsubsection{Best-first search}\label{subsec:BFS}
An approach that will help us make sure that we visit exactly the number of unique permutations that we want is best first search, that at every given level all of the child permutations are put into a priority queue. The parent of the next generation will simply be the permutation at the front of the queue. Because of the priority queue sorts its entries from highest to lowest word count, the next parent can come from any generation, not just from the immediate one. Different to \cref{subsec:CGA}, we need to keep track of and update the best permutation we come across.
After running nine trees with a target of at least 150 thousand permutations generated, the results are as show in \cref{tab:BFS_results}.

\begin{table}[H]
    \centering
    \caption{Results from Best First Search}\label{tab:BFS_results}
    \begin{tabular}{|l|l|l|l|}
        \hline
        Target & Root & Word count & Found at \\
        \hline
        Mono  & Base                   & 1949 & G236-P41396 \\
              & Seed2K      & 1858 & G335-P57992 \\
              & Mono from random search& 1867 & G28-P4948 \\
        \hline
        Rainbow & Base                 		& 2280 & G850-P148581 \\
                & Seed2K     		& 2234 & G744-P130448 \\
                & Rainbow from random search & 2258 & G603-P106256 \\
        \hline
        Sum   & Base                   &2296 & G671-P117117 \\
              & \cellcolor{pink}Seed2K       &\cellcolor{pink}2328 & \cellcolor{pink}G839-P146807 \\
              & Sum from random search &2323 & G483-P8494 \\
        \hline
    \end{tabular}
\end{table}
\subsubsection{Greedy search}\label{subsubsec:GA}
Thinking about how quickly \cref{subsec:CGA} would come to halt after a couple of generations, we decided to relaxed its constrains allowing the algorithm to run for as many iterations as specified. So, this algorithm will select the parent of the next generation by selecting the best child permutation in a generation even if such a candidate is not equal or better than its precursor. We will need to keep track of and update the best permutation we come across. We ran this set up for 130 thousand permutations. The results are listed in \cref{tab:GA_results}.

\begin{table}[H]
    \centering
    \caption{Results Greedy algorithm}\label{tab:GA_results}
    \begin{tabular}{|l|l|l|l|}
        \hline
        Target & Root & Word count & Found at \\
        \hline
        Mono  & Base                   &1509 & G484-P82115 \\
              &Seed2K     & 1546 & G88-P15398 \\
              & Mono from random search& 1531 & G39-P6916 \\
        \hline
        Rainbow & \cellcolor{pink}Base  & \cellcolor{pink}2386 & \cellcolor{pink}G652-P114941 \\
                & Seed2K    		& 2325 & G593-P104536 \\
                & Rainbow from random search & 2310 & G513-P90969 \\
        \hline
        Sum   & Base                   & 2286 & G439-P77233 \\
              & Seed2K      & 2360 & G277-P48954 \\
              & Sum from random search &2331 & G666-P117592 \\
        \hline
    \end{tabular}
\end{table}

\cref{tab:tree_algo_comparison} will help us compare the results from the three algorithms we tried in this sub section. As we can appreciate, the best result was obtained using the regular Greedy algorithm with the `Base' permutation as root of the tree.
\begin{table}[H]
    \centering
    \caption{Comparison for the tree algorithms}\label{tab:tree_algo_comparison}
    \begin{tabular}{|l|l|l|}
        \hline
        Algorithm & Root & Word count\\
        \hline
        Constrained greedy  &Base&2268\\
        \hline
        Best First Search & Seed2K&2328\\
        \hline
        \cellcolor{pink}Greedy & \cellcolor{pink}Base & \cellcolor{pink}2386\\
        \hline
    \end{tabular}
\end{table}

\subsection{Reinforcement Learning}\label{subsec:RL}
Given that several breakthroughs and advances thought to be near impossible to be conquered by machines has been achieved by Reinforcement learning (like AlphaGo or the multiple applications on video games and robotics)\cite{RL1} we thought that creating an RL agent and training it to make permutations out of our base permutation, would most likely give us good results. With the objective of making an RL model whose results can be comparable to our previous methods, we thought of an agent navigating the tree search structure described in \cref{subsec:tree}.The goal was for the agent to learn a strategy for selecting the parent of the next generation, aiming for the final state to be a high-ranking permutation. Since we won't be evaluating every child at each level, we can also describe the agent as one that builds a high-ranking permutation by making a series of valid swaps (see \cref{fig:6} and \cref{fig:7}).

\paragraph{Environment}
We used the Gymnasium API\cite{gym} by Farama Foundation to set up our Building Blocks environment. Gymnasium offers several pre-made RL environments to learn and experiment with, but also provides an interface to create custom environments.\\

\underline{Observation space:} Describing the observation space whether partial or fully observable. In our case, the observation space represents a procedure made of 36 sub procedures that are mutually independent, where each sub procedure can be done in 26 different ways. So, the cardinality of a set containing all the possible observations would be equal to$26^{36} $.  Following Gymnasium guidelines, we express this space as multi-discrete space of 36 items where every item can take up an integer between 0 to 25 ( Gymnasium.spaces.MultiDiscrete([26,26,26,…26]) ).\\

\underline{Action space:}  The 180 options from which the agent can take one at a given time step (The list of 2-item tuples that describe the Pitted swaps see image 6 and image 7). In Gymnasium it is described as a discrete space that can have a value between 0 and 179 ( \textit{Gymnasium.spaces.Discrete(180)}).\\

\underline{Reset Function:} That is called whenever the agent is initialized, or it reaches a terminal state. In the case of our custom environment, this function sets the current observation to a random permutation. It also initializes a visited list to keep track of the unique permutations that the agent has generated.\\

\underline{Step Function:} That is called at every time-step, it takes as input a valid action taken by the agent. We translate this action into a change in the current state by using the action chosen by the agent as an index of the `swaps list' and making the swap. We also calculate the reward and keep track of the current state in the visited list. The step function requires us to return a 4-item tuple containing an integer or decimal representing the reward, a Boolean describing whether the agent has reached a terminal state, a Boolean describing whether the agent has reached a truncated stated, and a dictionary with extra information about the action taken.\\

As you may have noticed, we talked about rewarding the agent in the `step' function call, but did not get into the details of how this was calculated. That is because there were two general ways in which we could reward: Immediate reward and delayed reward. With an immediate reward system, the agent would have more feedback on every specific action at a given state but would lack a sense of long-term benefits. And with a delayed reward, it has a sense of long terms benefits but being ignorant of how good specific actions are. We examined the two different reward systems, both of which require us to set up the training/testing process in terms of episodes of n time-steps each episode. For an immediate reward system, at every time-step, the reward will be a thousandth of the difference of the current state and the initial state's word count (which can be positive or negative), but only for those non-repeating permutations within an episode. Otherwise, the reward will be -1. With this penalty, we attempted to incentivize the agent to take actions that result in unique permutations. We also hoped to prevent loops where a change is made at time-step t and then reversed at t+1.
For a delayed reward system, the agent will only be rewarded at the end of an episode. The reward will be a thousandth of the difference between the current word count and the initial word count.\\

\paragraph{Algorithms}
When it comes to RL, StableBaselines3 \cite{sb3} has one of the most documented and reliable implementations of learning algorithms for python. Out of a list of 12 RL algorithms available through StableBaselines3, we tried two algorithms that work best for discrete spaces: Proximal Policy Optimization(PPO) and Deep Q-Network (DQN), both using a `multi-layer perceptron policy' (MlpPolicy).\\

\paragraph{Training}
We trained 4 agents, one using PPO and the immediate reward, one using DQN and immediate reward, one using PPO and delayed reward, and one using DQN and delayed reward. We ran each of them for a total of 1000 episodes with an episode length of 500 time-steps.\\

\paragraph{Evaluation}
When we want to evaluate a model trained using StableBaselines3 model, we have two options of prediction: deterministic and non-deterministic. However, for every one of the models evaluated with deterministic prediction, the agent would make a choice and then undo the change, resulting in an agent that simply oscillates between two choices and never progresses. That is why we relied on non-deterministic predictions. In \cref{tab:RL_results} you can find the best word count found after running each model for 50 episodes, where every episode lasts 500 time-steps.

\begin{table}[htbp]
    \centering
    \caption{Comparison of Immediate and Delayed Reward Training}\label{tab:RL_results}
    \begin{tabular}{|l|c|c|}
        \cline{2-3}
        \multicolumn{1}{c|}{}& PPO & DQN \\
        \hline
        Immediate reward training & 1853 words & 1737 words \\
        Delayed reward training   & 1804 words & 1725 words \\
        \hline
    \end{tabular}
\end{table}

Since RL agents learn by making random choices and then crating a policy based on the experiences it is exposed to, we created a `random agent' as a control case, that would make random choices at every time-step. We also ran this agent for 50 episodes, so that it could give us a better sense of how our agent improved after training.  The best word count found by our control agent was a total of \textbf{1832 words}. By contrast, our trained agents, showed no appreciable improvement, which tells us that it will require more training time for our agents to perform better than a random version of itself, and even more time for such agents to perform better than our previously executed methods. As a result, we abandoned this method and chose to explore genetic algorithms which are well suited to noisy problems.

\subsection{Genetic algorithm}\label{subsec:genetic}
genetic algorithms attempt to mimic the way genetic information is passed down from parents to offspring, through natural selection\cite{GA}. And since we are attempting to maximize the permutation of a 36-item array, a genetic algorithm seems promising as a sequence of letters is analogous to a sequence of genes, therefore in our context the individuals or chromosomes are our 36-item permutations.

\subsubsection{Set up} We will start from a specified number of random individuals, every generation after that is made up partly of elites from the previous generation, crossed, mutant and random individuals. We describe how these sub populations are generated hereafter.\\

\paragraph{Elites} The top number of specified individuals, that are transferred to the next generation intact.\\

\paragraph{Crossovers} Crossed offspring are created by combining different segments of genetic information ---also known as loci--- from two individuals. Although in biology the length of a locus varies, since we are working with a set of 6 cubes, it is beneficial for us to think of each cube as a locus. So, the process of crossover is a six-step process in which the corresponding locus is copied from either parent at random. See \cref{fig:crossover} for a visual representation. It is worth mentioning that this process of crossover will probably break the constraint that we stipulated in \cref{subsec:letter_rep}: "… to include each letter of the English alphabet at least once, ...". That is because the random choice of loci will result in the inclusion or exclusion of some letters, generating cube sets that are not a permutation of the `Base' permutation. We chose to have two types of crossovers: \underline{type 1} are the result of crossing two elites, and \underline{type 2} are the result of crossing an elite and a non-elite. The parents of both types are chosen randomly.
\begin{figure}[H]
  \centering
  \includegraphics{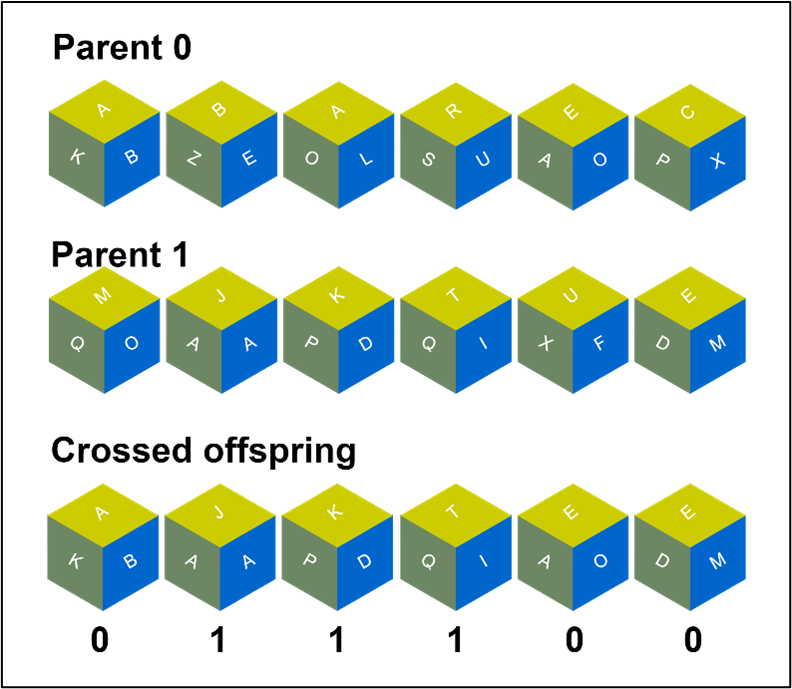}
  \caption{Example of a crossover}
  \label{fig:crossover}
\end{figure}

\paragraph{Mutations} Mutations happen by making a carbon copy of a given individual and simply swapping 2 of its items chosen at random. We will always mutate the best n elites.

\subsubsection{Tests}
In the interest to see if a `good' starting point will give a population an advantage, we created parallel scenarios for the starting point: The first scenario's original population is completely random, but for scenario 2 and 3, three individuals are arbitrarily chosen and the rest are randomly generated.\\
For scenario 2, we included the 'Base' permutation,  the 'Seed2K', and, the best permutation from \cref{subsec:RS}. The idea was for this scenario to have poor-performing permutations\\
For scenario 3 we included the best permutations found by each algorithm of \cref{subsec:tree} when maximizing for the sum of rainbow and mono words.

We ran 7 iterations with different parameters as shown in \cref{tab:genAl_results}. The `Found at :' row of each scenario tells us the generation in which a permutation with this word count was found for the first time (E.g G588 represents the 588th generation). At the beginning we tested 1000 generations per iteration, but as we see in iterations 1-3 the best results are found by the 323rd iteration (on average), so we decided to reduce the number of generations to 500.

\begin{table}[H]
    \centering
    \caption{Arguments and results for Iterations of genetic algorithm.}\label{tab:genAl_results}
    \begin{tabular}{|l|c|c|c|c|c|c|c|}
        \cline{2-8}
        \multicolumn{1}{c|}{}& \multicolumn{7}{c|}{Iteration}\\
        \cline{2-8}
        \multicolumn{1}{c|}{}& 1 & 2 &3 &4 &5 & 6 & 7 \\
        \hline
        Generations       & 1000 & 1000 & 1000 & 500 &500 & 500 & 500\\
        \hline
        Elites            & 40   & 20   & 40   & 20  & 20   & 10 & 20\\
        Crossed type 1    & 40   & 20   & 13   & 6 & 10   & 25 & 10\\
        Crossed type 2    & 0   & 0   & 27   & 14 & 10   & 25 & 10\\
        Mutated           & 10   & 30   & 10   & 40 & 40   & 20 & 50\\
        Random            & 10   & 30   & 10   & 20 & 20   & 20 & 10\\
        \hline
        Scenario 1 best:& 2717 & 2510 & 2775 & 2590 & 2726 & 2638 & 2810\\
        Found at	:       & G588 & G187 & G987 & G424 & G158 & G392 & G367\\
        	\hline
        Scenario 2 best:& 2706 & 2349 & 2698 & 2703 & 2705 & 2703 &2794\\
        Found at	:      & G485 & G64 &  G398 & G320 & G129 & G254 & G350\\
        \hline
        Scenario 3 best:& 2666 & 2360 & \cellcolor{pink}2846 & 2763 & 2749 & 2769 & 2778 \\
        Found at	:      & G77 &   G0 &  G845 & G260 & G82 & G316 & G362\\
        \hline
    \end{tabular}
\end{table}

%\cellcolor{pink}
The best permutation  was found during iteration 3, scenario 3, with a total of 2846 words. Its corresponding permutation is `aoweuigycntelsdrnheieoaamtfbdprlstkr', which is missing some letters (f, j, q, v, x and z), as we were expecting. Visit \href{https://richr31.github.io/building-blocks-web/}{this page} to visualize these results\\

We modified the crossover function to re include letters that a crossed child loses. After making these changes we re ran the experiment, with the same parameters as before and the results are listed in \cref{tab:genAl_results_2}.

\begin{table}[H]
    \centering
    \caption{Arguments and results for Iterations of genetic algorithm corrected.}\label{tab:genAl_results_2}
    \begin{tabular}{|l|c|c|c|c|c|c|c|}
        \cline{2-8}
        \multicolumn{1}{c|}{}& \multicolumn{7}{c|}{Iteration}\\
        \cline{2-8}
        \multicolumn{1}{c|}{}& 1 & 2 &3 &4 &5 & 6 & 7 \\
        \hline
        Generations       & 1000 & 1000 & 1000 & 500 &500 & 500 & 500\\
        \hline
        Elites            & 40   & 20   & 40   & 20  & 20   & 10 & 20\\
        Crossed type 1    & 40   & 20   & 13   & 6 & 10   & 25 & 10\\
        Crossed type 2    & 0   & 0   & 27   & 14 & 10   & 25 & 10\\
        Mutated           & 10   & 30   & 10   & 40 & 40   & 20 & 50\\
        Random            & 10   & 30   & 10   & 20 & 20   & 20 & 10\\
        \hline
        Scenario 1 best:& 2218 & 2308 & 2302 & 2302 & 2232 & 2220 & 2245\\
        Found at	:       & G991 & G158 & G658 & G256 & G151 & G171 & G397\\
        	\hline
        Scenario 2 best:& 2216 & 2224 & 2245 & 2203 & 2203 & 2219 & 2221\\
        Found at	:      & G164 & G123 & G423 & G92   & G99 & G61 & G171\\
        \hline
        Scenario 3 best:& \cellcolor{pink}2360 & \cellcolor{pink}2360 & \cellcolor{pink}2360 & \cellcolor{pink}2360 & \cellcolor{pink}2360 & \cellcolor{pink}2360 & \cellcolor{pink}2360 \\
        Found at	:      & G0 	  & G0   & G0 	& G0   & G0   & G0   & G0\\
        \hline
    \end{tabular}
\end{table}

The highest word count achieved by this `corrected' experiment was 2360, but since this occurred in every iteration of scenario 3 at generation 0, it indicates the algorithm never improved beyond an initial arbitrary individual. This would mean that the actual best word count found by the corrected version of the genetic algorithm is 2846, found in scenario 1 of iteration 2.

\section{Conclusions} \label{sec:conclusions} Our experimentation demonstrated several search and optimization algorithms for our six-cube set of `Building blocks'. \cref{tab:all_algo_comparison} shows the results found by each algorithm when we adhere to the constrain stated in \cref{sec:set_up} of including every letter of the English alphabet in our set, we see that the best word count was found by performing a greedy algorithm in \cref{subsubsec:GA}. See \cref{tab:all_algo_comparison} to compare the best word counts found by each algorithm.

\begin{table}[H]
    \centering
    \caption{Comparison of all algorithms}\label{tab:all_algo_comparison}
    \begin{tabular}{|l|l|l|}
        \hline
        Algorithm & Permutation & Word count\\
        \hline
        Random search & tpndaqaeiiuocrxleyltvhoebzrfkmjwgssn & 1881\\
        \hline
        Simulated annealing & oqieeyvbtpfdaeuiaojcrnslrnlhxskmzwgt & 2352\\
        \hline
        
        Constrained greedy  & eioxauioehjanqsrlrptgdkvbyctfsmezlnw &2268\\
        \hline
        Best First Search & yqoiaektdpvflnsjhrxaeouitegbwcrmlnzs & 2328\\
        \hline
        \cellcolor{pink}Greedy & \cellcolor{pink}oieauieoyqaetrnjlhvktdzpmslnrfgxcswb& \cellcolor{pink}2386\\
        \hline
        Genetic Corrected & ioiajeeauqeopgvtcwhkrlsxmrlzdnbytnfs & 2308\\
        \hline
    \end{tabular}
\end{table}

However, during \cref{subsec:genetic} we also found that if we relax such constrains, we can achieve a significant improvement in contrast to the other algorithms, as we saw in \cref{tab:genAl_results}. We should also bare in mind that said constraints make sense only for a set of six cubes, since the need to make sure to include every single letter of the alphabet in our set is less of an issue as we make our set larger.

With a sufficiently large set of cubes, we could potentially spell every word in our dataset. Our goal would then be to determine the smallest set of cubes that achieves this. The findings from this paper could be used to reduce our word list by eliminating the words found with our 6-cube set, enabling us to focus on maximizing coverage of the remaining words

\appendix

\section{permutation proof} \label{app:proof}
We express the 6-cube building block set as a 36-entry tuple.\\ 
$$(l_0, l_1, l_2, l_3, \dots ,l_{33}, l_{34}, l_{35}) \text{ where } l_i \text{ is a letter from the English alphabet}$$

$S_{36}$, the symmetric group on 36 elements describes all possible permutations. We will describe these elements as acting on the indices rather than the elements themselves. Additionally, we refer to a `legal' move as going from parent to child within the tree discussed in \cref{subsec:tree}.\\

We define a `legal move' as a two-cycle in $S_{36}$, (x,y) such that either one of the following holds: That i and j belong to the same cube as expressed in \cref{eq:same_cube}, or that i and j have the same color, as expressed by \cref{eq:same_color}.
\begin{equation}\label{eq:same_cube}
	6n \leq x,y < 6(n+1) \text{ for some } n \in \mathbb{N}_0
\end{equation}
\begin{equation}\label{eq:same_color}
	|x-y| \equiv 0 \quad (\mod 6)
\end{equation}

\begin{lemma}\label{lem:legal_moves}
Suppose two entries in a 6-cube building block set need to be swapped. This can be done in three of fewer `legal moves'.
\end{lemma}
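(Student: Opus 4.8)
The plan is to reduce everything to a short case analysis driven by the conjugation identity for transpositions. Write $c(i) = \lfloor i/6 \rfloor$ for the cube containing index $i$ and $k(i) = i \bmod 6$ for its color, so that a transposition $(x,y) \in S_{36}$ is a legal move precisely when $c(x) = c(y)$ (\cref{eq:same_cube}) or $k(x) = k(y)$ (\cref{eq:same_color}). We may assume $x \ne y$, since otherwise no move is needed.

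First I would dispatch the trivial case: if $c(x) = c(y)$ or $k(x) = k(y)$, then $(x,y)$ is itself a legal move and one step suffices. So suppose $c(x) \ne c(y)$ and $k(x) \ne k(y)$. The heart of the argument is to bridge the two indices with a single auxiliary one: let $z = 6\,c(x) + k(y)$, the entry sitting in $x$'s cube and carrying $y$'s color. Then $(x,z)$ is legal since $c(z) = c(x)$, and $(z,y)$ is legal since $k(z) = k(y)$; moreover $z \ne x$ (their colors differ) and $z \ne y$ (their cubes differ), so we are dealing with honest transpositions on three distinct indices. I would then check the identity
$$(x,z)\,(z,y)\,(x,z) = (x,y),$$
either by invoking that conjugating $(z,y)$ by the transposition $(x,z)$ merely renames $z$ as $x$, or by following the orbits of $x$, $y$, and $z$ through the three swaps directly. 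Hence performing the legal moves $(x,z)$, then $(z,y)$, then $(x,z)$ realizes the desired swap of the entries at $x$ and $y$ while fixing everything else, using three legal moves.

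I do not anticipate a real obstacle: the only points needing care are confirming that the auxiliary index $z$ lies in $\{0,\dots,35\}$ and is distinct from both $x$ and $y$, and keeping the composition order straight in the conjugation identity (here harmless, since the word $(x,z)(z,y)(x,z)$ encodes the same sequence of swaps read either way). As an optional sharpening one can note that in the second case three moves are genuinely necessary: a single transposition is an odd permutation, hence never a product of exactly two transpositions, and by hypothesis it is not itself a legal move — so the bound ``three or fewer'' cannot be improved.
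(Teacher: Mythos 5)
Your proposal is correct and follows essentially the same route as the paper: your auxiliary index $z = 6\,c(x) + k(y)$ is exactly the paper's $k = 6n + b$, and both arguments conclude with the same conjugation identity $(x,z)(z,y)(x,z) = (x,y)$. If anything, you are slightly more careful than the paper in justifying that $z$ is distinct from both $x$ and $y$, and your closing remark that three moves are sometimes necessary (by parity) is a nice optional addition not present in the original.
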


\begin{proof}

Let $i $ and $j $ be the indices corresponding to the entry swap. If $(i, j) $ is a legal move, we are done.

Suppose that this is not the case.

Let $0\leq a,b\leq 5$ where $a\neq b$, and let $0\leq m,n\leq5$ where $m \neq n$.

Then, $\exists nm$ such that
$$i = 6n + a \quad \text{and} \quad j = 6m + b $$

Consider a third index $k = 6n + b $, where $k \neq i $ and $k \neq j $.

Since $0 \leq n \leq 5 $ and $0 \leq b < 6 $, it follows that 
$$ 0 \leq k \leq 6 \cdot 5 + 5 = 35$$
\textbf{Observation 1}: $(i,k)$ is a legal move since 
$$6n\leq i,k < 6(n+1)$$
\textbf{Observation 2}: $(k,j)$ is a legal move since
$$k-j = 6n+b-6m-b = 6(n-m) \equiv 0 \quad (\mod 6)$$

Hence we can permute in the following way:
$$(i,k)(k,j)(i,k)$$
This product of three moves is equivalent to $(i,j)$.

\end{proof}

\begin{theorem}
Any element in $S_{36}$ can be represented as legal moves.
\end{theorem}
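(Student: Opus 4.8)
The plan is to reduce the statement to two standard facts: (i) the symmetric group $S_{36}$ is generated by its transpositions, and (ii) by \cref{lem:legal_moves} every transposition can be rewritten as a product of at most three legal moves. Composing these gives the theorem immediately, so the bulk of the work has already been done in the lemma.

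First I would recall the classical fact that every $\sigma \in S_{36}$ decomposes as a product of transpositions, $\sigma = \tau_1 \tau_2 \cdots \tau_k$. One concrete way to see this is to write $\sigma$ in disjoint-cycle notation and apply the identity $(a_1\, a_2\, \dots\, a_r) = (a_1\, a_r)(a_1\, a_{r-1}) \cdots (a_1\, a_2)$ to each cycle; this even lets us take every $\tau_\ell$ of the form $(a_1, m)$, though we will not need that refinement here.

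Next, for each transposition $\tau_\ell = (i_\ell, j_\ell)$ occurring in the product, I would invoke \cref{lem:legal_moves}: either $(i_\ell, j_\ell)$ is itself a legal move, or it equals a product $(i_\ell, k_\ell)(k_\ell, j_\ell)(i_\ell, k_\ell)$ of three legal moves, with $k_\ell$ the auxiliary index built in the lemma's proof. Substituting each of these expressions back into $\sigma = \tau_1 \cdots \tau_k$ exhibits $\sigma$ as a product of legal moves (of length at most $3k$), which is exactly the claim; in particular this shows every permutation in the search space is reachable from any root by a finite sequence of parent-to-child tree steps.

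There is no genuinely hard step. The only point worth stating carefully is that \cref{lem:legal_moves} applies to an \emph{arbitrary} pair of distinct indices, so it covers every transposition that can appear in the decomposition rather than just some privileged ones; once that is observed, the substitution is purely formal. If a more self-contained phrasing is preferred, one can instead note that the graph on vertex set $\{0,\dots,35\}$ whose edges are the legal moves is connected — any two indices either lie in a common cube or share a color, and these relations plainly tie the whole index set together — and then appeal to the standard fact that the transpositions indexed by the edges of a connected graph on $n$ vertices generate $S_n$. From that viewpoint \cref{lem:legal_moves} is simply the quantitative refinement that this generating set has diameter at most three.
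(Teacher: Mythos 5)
Your proposal is correct and follows essentially the same route as the paper: decompose $\sigma$ into transpositions and replace each one by the (at most three) legal moves supplied by \cref{lem:legal_moves}. Your closing remark about the connectedness of the legal-move graph is a nice alternative framing, but it does not change the substance of the argument.
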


\underline{Sketch of proof}: Let $\sigma \in S_{36}$. $\sigma$ can be decomposed into a product of only two cycles. By \cref{lem:legal_moves}, those two cycles can be broken  down into at most a product of three `legal moves'. Hence $\sigma$ can be written as a product of only legal moves.
Note that 36-cycle permutations can be expressed as a product of 35 two-cycles. Based on our work, a 36-cycle can be decomposed into somewhere between 35 and 105 legal moves. So, the exploration of 130 thousand legal moves, is sufficiently expressive in terms of permutations.

\section{Results of random search per maximization case}\label{app:RS}
\begin{table}[htbp]
    \centering
    \caption{Random search per maximization case}
\begin{tabular}{|c|c|c|c|c|}
        \cline{2-5}
        \multicolumn{1}{c|}{}&\textbf{Permutation} & \textbf{Mono} & \textbf{Rainbow} & \textbf{Sum} \\
        \hline
        
        \multirow{10}{*}{\rotatebox{90}{Mono maximization}}&ovrsmgbijtefnruektsezlxiwoyahadcqpln & 164 & 1405 & 1569 \\
        &qmesrezkbepdlxwlsnijgttrhnyaoocivfua & 164 & 1338 & 1502 \\
        &hsljobtifsncaeuygxewoerzrlpndkmtivaq & 159 & 1334 & 1493 \\
        &aynntismhzrdefskexralvwbtlqiocpeugjo & 158 & 1326 & 1484 \\
        &arlqymriwclnpjitghtvxsafskeobuezdnoe & 157 & 1379 & 1536 \\
        &auirdtsnxgeltkzepohqcnayrfvolbemjsiw & 157 & 1324 & 1481 \\
        &edssrohnpnwlxgrlcvymaeikjfeiuaqzttob & 157 & 1464 & 1621 \\
        &itowhalnzdetmkxeuejiyaspblrgvsoqcnfr & 156 & 1405 & 1561 \\
        &ayocrqsiwsolpziglxejhkmtrdevabtnnuef & 156 & 1293 & 1449 \\
        &tojvdsmnqiexswhallecypbtakgeuorfzrni & 155 & 1355 & 1510 \\
        \hline
        \multirow{10}{*}{\rotatebox{90}{Rainbow maximization}}&esvrlgoraxbwycutsiinomehjtklqepdafnz & 28 & 1813 & 1841 \\
        &tpndaqaeiiuocrxleyltvhoebzrfkmjwgssn & 69 & 1812 & 1881 \\
        &lwhorbnslixajtrunfecmqgpykeisdzvtaoe & 57 & 1798 & 1855 \\
        &iiyaoupxnwteargecnklmfdtoslveqjrshzb & 52 & 1790 & 1842 \\
        &nrfdvphzbneolgcmytriswajulqtkeoaesxi & 59 & 1785 & 1844 \\
        &nsvnlxdqtrsyfekioabmgtecaourjepzwlih & 68 & 1778 & 1846 \\
        &evinyrztakwiecqtumjoasehgsolfxpdrbnl & 40 & 1778 & 1818 \\
        &sfnvyhbsleloekrptdcgejmtwiauiqorxzna & 91 & 1775 & 1866 \\
        &wtkedlxlirbmnctjpnzayoherfvsegaouqis & 84 & 1773 & 1857 \\
        &trxfnlewltorkzoeasgbspqcdmnjaiuyevih & 57 & 1773 & 1830 \\
        \hline
        \multirow{10}{*}{\rotatebox{90}{Sum maximization}}&tpndaqaeiiuocrxleyltvhoebzrfkmjwgssn & 69 & 1812 & 1881 \\
        &sfnvyhbsleloekrptdcgejmtwiauiqorxzna & 91 & 1775 & 1866 \\
        &smgxtcvlthjndewioqkazoeaiyeubfrprlns & 102 & 1755 & 1857 \\
        &wtkedlxlirbmnctjpnzayoherfvsegaouqis & 84 & 1773 & 1857 \\
        &lwhorbnslixajtrunfecmqgpykeisdzvtaoe & 57 & 1798 & 1855 \\
        &jcsfwabizdaxisutoenrgklmhyeoeqplvrnt & 97 & 1753 & 1850 \\
        &wztselbgyxthoeiacunofqvrapeisjlkmdrn & 80 & 1770 & 1850 \\
        &yondtsheoriabpflxveinkautgqwcemlszjr & 84 & 1762 & 1846 \\
        &nsvnlxdqtrsyfekioabmgtecaourjepzwlih & 68 & 1778 & 1846 \\
        &nrfdvphzbneolgcmytriswajulqtkeoaesxi & 59 & 1785 & 1844 \\
        \hline
\end{tabular}
\end{table}
\clearpage
\section{Results of simulated annealing per maximization case}\\
\begin{table}[htbp]
    \centering
    \caption{Simulated annealing per maximization case}
\begin{tabular}{|c|c|c|c|c|c|}
        \cline{2-6}
        \multicolumn{1}{c|}{}&\textbf{Permutation} & \textbf{Mono} & \textbf{Rainbow} & \textbf{Sum} & \textbf{Found at} \\
        \hline
        \multirow{10}{*}{\rotatebox{90}{Mono maximization}} 
        & otmgjhwralknredxzunaiyviesefctbploqs & 227 & 1268 & 1495 & P60981\\
        & oeakqbspduvowsiyznnaexhrermfjgttllci & 226 & 1299 & 1525 & P52603\\
        & eedfkbstaxznwpecjinrmyqroailhutslovg & 225 & 1326 & 1551 & P52405\\
        & eeazkbstdxfnwsecqinriyjroamlhutplovg & 222 & 1368 & 1590 & P52321\\
        & wsilvbetdxquheegzoipmfjrnaaycnsrlokt & 221 & 1199 & 1420 & P52131\\
        & wsilvbeeaxquttegzhipmfjrnrlycnsadoko & 220 & 1239 & 1459 & P52066\\
        & rtrljcaenfqimaeubkilwgxhtpoovsesdyzn & 217 & 1207 & 1424 & P50276\\
        & ltrljcaenfqimaeubkirwgxhtpoovsesdyzn & 213 & 1217 & 1430 & P50275\\
        & etryjcaenfximaeuqkirwglhtpoovnlsdbzs & 211 & 1241 & 1452 & P50221\\
        & klorrcbasotfymtwaqnduepjlinxsviehgez & 210 & 1287 & 1497 & P47092\\
        \hline
        \multirow{10}{*}{\rotatebox{90}{Rainbow maximization}}
         & tkrosidtsqlawvjanuypnexocglzrefbmihe & 4 & 2239 & 2243 & P37893\\
         & ykrosidtsqlawvjanutpnexocglzrefbmihe & 4 & 2214 & 2218 & P37840\\
         & ykrosidtsqlatvjanuwpnexocglzrefbmihe & 4 & 2213 & 2217 & P37749\\
         & wkrosidtsqlatvjanuypnexocglzrefbmihe & 4 & 2206 & 2210 & P37653\\
         & gkrosadtsqlitvjanumynexocwlzrefbpihe & 5 & 2174 & 2179 & P37284\\
         & gkrosadtsqletvjanumynexocwlzrifbpihe & 5 & 2171 & 2176 & P37280\\
         & gcrosidtwqletvjanumynexokslarzfbpihe & 6 & 2159 & 2165 & P37074\\
         & gcrosidtwzletvjanumynexokslarqfbpihe & 6 & 2156 & 2162 & P37005\\
         & gcrosidthzletvjanumynexokslarqfbpiwe & 6 & 2154 & 2160 & P36980\\
         & gcrosidtmzletvjanuhynexokslarqfbpiwe & 6 & 2131 & 2137 & P36971\\
         \hline
        \multirow{10}{*}{\rotatebox{90}{Sum maximization}}
        &oqieeyvbtpfdaeuiaojcrnslrnlhxskmzwgt & 79 & 2273 & 2352 &P52704 \\
        &oqieeyvbtpfdaeuiaoxcrnslrnlhjskmzwgt & 76 & 2271 & 2347 &P52444\\
        &oqieeyvbtpfdaeuiaoxcrnslrnlhjskmzgwt & 72 & 2273 & 2345 &P52403\\
        &oqyeeivbtpfdaeuiaojcrnslrnlhxskmzwgt & 96 & 2240 & 2336 &P50859\\
        &oqieeywbtpfvaeuaiojmrnslrnlxhksczgdt & 73 & 2262 & 2335 &P47960\\
        &oqieeywbtpfvaeoaiuxmrlhnrglnskscjzdt & 94 & 2234 & 2328 &P46335\\
        &oqieeywbtpfvaeoaiuxmrlhnrglnsksczjdt & 91 & 2231 & 2322 &P46325\\
        &oqieeywbtpfvaeoaiuxmrlhnrglnskjczsdt & 101& 2214 & 2315 &P46180\\
        &oqieeywbtpfvaeoaiuxmrlhnrglnskdczsjt & 93 & 2220 & 2313 &P45908\\
        &oqieeywbtpfvaeoiauxmrlhnrglnskdczsjt & 92 & 2210 & 2302 &P45675\\
        \hline
\end{tabular}
\end{table}
\clearpage
\section{Results from N-ary tree algorithms}
\begin{table}[htbp]
    \centering
    \caption{Constrained greedy algorithm with the `base' permutation as root}
    \begin{tabular}{|c|c|c|c|c|l|}
        \cline{2-6}
        \multicolumn{1}{c|}{}& \textbf{Permutation} & \textbf{Mono} & \textbf{Rainbow} & \textbf{Sum} & \textbf{Found at} \\
        \hline
        \multirow{10}{*}{\rotatebox{90}{Mono Maximization}} & qreaahveoitojlnleszidbrpgfmkstxywncu & 201 & 1308 & 1509 & G484-P82115 \\
        & qreaahgeoitojlnleszidbrpxfmkstvywncu & 201 & 1299 & 1500 & G484-P82114 \\
        & qreaahzeoitojlnlesxidbrpgfmkstvywncu & 201 & 1371 & 1572 & G484-P82113 \\
        & qreaahjeoitoxlnleszidbrpgfmkstvywncu & 201 & 1350 & 1551 & G484-P82112 \\
        & vreaahxeoitojlnleszidbrpgfmkstqywncu & 201 & 1274 & 1475 & G484-P82111 \\
        & greaahxeoitojlnleszidbrpqfmkstvywncu & 201 & 1283 & 1484 & G484-P82110 \\
        & zreaahxeoitojlnlesqidbrpgfmkstvywncu & 201 & 1349 & 1550 & G484-P82109 \\
        & jreaahxeoitoqlnleszidbrpgfmkstvywncu & 201 & 1345 & 1546 & G484-P82108 \\
        & xreaahqeoitojlnleszidbrpgfmkstvywncu & 201 & 1358 & 1559 & G484-P82107 \\
        & qreaahxeoitojlnleszidbrpgfmkstvywncu & 201 & 1353 & 1554 & G483-P81912 \\
        \hline
        \multirow{10}{*}{\rotatebox{90}{Rainbow Maximization}} & eioxauioehjanqsrlrptgdkvbyctfsmezlnw & 48  & 2220 & 2268 & G25-P4444 \\
        & eioxauioehjanqsrlrptgdvkbyctfsmezlnw & 49  & 2216 & 2265 & G24-P4239 \\
        & eioxauioehjanqsrlrptgdkvbycftsmezlnw & 49  & 2211 & 2260 & G26-P4633 \\
        & eioxauioehjanqsrlrptgdvkbycftsmezlnw & 51  & 2207 & 2258 & G25-P4457 \\
        & eioxauioehjanqsrlrptgdkvbycsftmezlnw & 51  & 2206 & 2257 & G26-P4634 \\
        & eioxauioehjanqsrlrpkgdtvbyctfsmezlnw & 46  & 2206 & 2252 & G26-P4614 \\
        & eioxaujoehianqsrlrptgdkvbyctfsmezlnw & 55  & 2206 & 2261 & G26-P4582 \\
        & eioxauioehjaqnsrlrptgdvkbyctfsmezlnw & 57  & 2205 & 2262 & G23-P4207 \\
        & eioxauioehjanqsrlrptgdvkbyctfsmewlnz & 50  & 2204 & 2254 & G25-P4471 \\
        & eioxauioehjaqnsrlrptgdkvbyctfsmezlnw & 56  & 2204 & 2260 & G24-P4267 \\
        \hline
        \multirow{10}{*}{\rotatebox{90}{Sum Maximization}} & euoaaqloeitwrizslnnvcbpmhygdjtxesfkr & 55  & 2061 & 2116 & G16-P2888 \\
        & euoaaqloeitwrinslznvcbpmhygdjtxesfkr & 71  & 2044 & 2115 & G17-P3018 \\
        & euoaqaloeitwrizslnnvcbpmhygdjtxesfkr & 75  & 2039 & 2114 & G17-P2991 \\
        & euoaaqloeitwrizslnnbcvpmhygdjtxesfkr & 57  & 2054 & 2111 & G17-P3028 \\
        & euoaaqloeitwrizslnnvcmpbhygdjtxesfkr & 56  & 2054 & 2110 & G17-P3035 \\
        & euoaaqloeitwrizslnnvcbpmhygdjtxesfrk & 64  & 2041 & 2105 & G17-P3065 \\
        & euoaaqloeitwrizslnnvcbpmhygdjtxeskfr & 63  & 2042 & 2105 & G15-P2786 \\
        & euoaaqloeitwrizslnnvcmpbhygdjtxeskfr & 66  & 2038 & 2104 & G16-P2859 \\
        & quoaaeloeitwrizslnnvcbpmhygdjtxeskfr & 71  & 2033 & 2104 & G16-P2806 \\
        & euoaaqloeitwrizslnnvcbpmhygdftxesjkr & 59  & 2045 & 2104 & G15-P2710 \\
        \hline
    \end{tabular}
\end{table}

\begin{table}[htbp]
    \centering
    \caption{Constrained greedy algorithm with the `Seed2K' permutation as root}
    \begin{tabular}{|c|c|c|c|c|l|}
    		\cline{2-6}
         \multicolumn{1}{c|}{}& \textbf{Permutation} & \textbf{Mono} & \textbf{Rainbow} & \textbf{Sum} & \textbf{Found at} \\
        \hline
        \multirow{10}{*}{\rotatebox{90}{Mono Maximization}}
        & asouadttgjyrsefclbrivkieemxqwnplhzno & 203 & 1343 & 1546 & G88-P15398 \\
        & asouadrtgjyrsefclbeivkietmxqwnplhzno & 203 & 1265 & 1468 & G88-P15397 \\
        & asouadstgjyreefclbrivkietmxqwnplhzno & 203 & 1287 & 1490 & G88-P15396 \\
        & psouadetgjyrsefclbrivkietmxqwnalhzno & 203 & 1256 & 1459 & G88-P15395 \\
        & tsouadetgjyrsefclbrivkieamxqwnplhzno & 203 & 1305 & 1508 & G88-P15394 \\
        & rsouadetgjyrsefclbaivkietmxqwnplhzno & 203 & 1322 & 1525 & G88-P15393 \\
        & ssouadetgjyraefclbrivkietmxqwnplhzno & 203 & 1256 & 1459 & G88-P15392 \\
        & esouadatgjyrsefclbrivkietmxqwnplhzno & 203 & 1306 & 1509 & G88-P15391 \\
        & asouadetgjyrsefclbrikvietmxqwnplhzno & 203 & 1336 & 1539 & G88-P15356 \\
        & asouadetgjyrsefclbrivkietmxqwnplhzno & 203 & 1324 & 1527 & G87-P15209 \\
        \hline
        \multirow{10}{*}{\rotatebox{90}{Rainbow Maximization}} & qeikoirmtvwgrfhblesjapeoxsntcylnzdua & 50 & 2117 & 2167 & G18-P3303 \\
        & xeikoirmtvwgrfhblesjapeoqsntcylnzdua & 50 & 2109 & 2159 & G19-P3452 \\
        & qeikoirmtvwgrfhblesjapeonsxtcylnzdua & 42 & 2108 & 2150 & G19-P3420 \\
        & qeakoirmtvwgrfhblesjipeonsxtcylnzdua & 42 & 2106 & 2148 & G18-P3243 \\
        & qeakoirmtvwgrfhblesjipeoxsntcylnzdua & 50 & 2105 & 2155 & G17-P3143 \\
        & qeikoirmtvwgrfhblesjapeoxsntcylnudza & 58 & 2102 & 2160 & G19-P3444 \\
        & ieikoqrmtvwgrfhblesjapeoxsntcylnzdua & 52 & 2100 & 2152 & G19-P3364 \\
        & qaekoirmtvwgrfhblesjipeoxsntcylnzdua & 54 & 2099 & 2153 & G18-P3188 \\
        & qeadoirmtvwgrfhblesjipeoxsntcylnzkua & 50 & 2099 & 2149 & G16-P2866 \\
        & qeikoirmtvwgrfhblesjapeocsntxylnzdua & 49 & 2098 & 2147 & G19-P3422 \\
        \hline
        \multirow{10}{*}{\rotatebox{90}{Sum Maximization}} & qodauesytcfglhnpvrjiziaotekbwmrslnex & 65 & 2134 & 2199 & G31-P5596 \\
        & qadouesytcfglhnpvrjiziaotekbwmrslnex & 76 & 2120 & 2196 & G32-P5693 \\
        & qadouesytcfglhnpvrjiziaotekbwmrslxen & 81 & 2115 & 2196 & G31-P5516 \\
        & qokauesytcfglhnpvrjiziaotedbwmrslnex & 65 & 2130 & 2195 & G32-P5807 \\
        & jodauesytcfglhnpvrqiziaotekbwmrslnex & 65 & 2130 & 2195 & G32-P5776 \\
        & qodauesytcfglhnpvrjiziaotekbwmrslxen & 70 & 2124 & 2194 & G30-P5354 \\
        & qodauesytcfglhnpvrzijiaotekbwmrslnex & 65 & 2127 & 2192 & G32-P5732 \\
        & jodauesytcfglhnpvrqiziaotekbwmrslxen & 70 & 2121 & 2191 & G31-P5600 \\
        & qodauesytcfglhnpvrjiziaotekbwmrxlsen & 75 & 2114 & 2189 & G31-P5589 \\
        & qodauesytcfglhnpvrzijiaotekbwmrslxen & 70 & 2119 & 2189 & G31-P5555 \\
        \hline
    \end{tabular}
\end{table}

\begin{table}[htbp]
    \centering
    \caption{Constrained greedy algorithm with the best permutations from random search as roots}
    \begin{tabular}{|c|c|c|c|c|l|}
        \cline{2-6}
        \multicolumn{1}{c|}{}& \textbf{Permutation} & \textbf{Mono} & \textbf{Rainbow} & \textbf{Sum} & \textbf{Found at} \\
        \hline
        \multirow{10}{*}{\rotatebox{90}{Mono Maximization}}
        & omvsgrsiftjbtrkeuenezlxwiayaholdqpcn & 213 & 1318 & 1531 & G39-P6916 \\
        & omvsgrniftjbtrkeueiezlxwsayaholdqpcn & 213 & 1271 & 1484 & G39-P6915 \\
        & omvsgrtiftjbirkeuenezlxwsayaholdqpcn & 213 & 1238 & 1451 & G39-P6914 \\
        & lmvsgriiftjbtrkeuenezlxwsayahoodqpcn & 213 & 1260 & 1473 & G39-P6913 \\
        & smvsgriiftjbtrkeuenezlxwoayaholdqpcn & 213 & 1222 & 1435 & G39-P6912 \\
        & nmvsgriiftjbtrkeueoezlxwsayaholdqpcn & 213 & 1287 & 1500 & G39-P6911 \\
        & tmvsgriiftjborkeuenezlxwsayaholdqpcn & 213 & 1299 & 1512 & G39-P6910 \\
        & imvsgroiftjbtrkeuenezlxwsayaholdqpcn & 213 & 1280 & 1493 & G39-P6909 \\
        & omvsgriiftjbtrkeuenexlzwsayaholdqpcn & 213 & 1249 & 1462 & G39-P6875 \\
        & omvsgriiftjbtrkeuenezlxwsayaholdqpcn & 213 & 1251 & 1464 & G38-P6674 \\
        \hline
        \multirow{10}{*}{\rotatebox{90}{Rainbow Maximization}} 
        & elxrcgotavskysutbwinazemjrilqehfodnp & 27 & 2047 & 2074 & G15-P2717 \\
        & elxrcgotavqkysutbwinomezjrilsehfadnp & 23 & 2047 & 2070 & G14-P2636 \\
        & elxrcgotavskysutbwinamezjrilqehfodnp & 23 & 2047 & 2070 & G14-P2612 \\
        & elxrcgotavskysutbminazewjrilqehfodnp & 27 & 2046 & 2073 & G16-P3006 \\
        & elxrcgotavskysutbwinomezjrilqehfadnp & 23 & 2046 & 2069 & G13-P2417 \\
        & elxrcgotavqkysutbwinamezjrilsehfodnp & 23 & 2043 & 2066 & G15-P2812 \\
        & elorcgotavskysutbwinamezjrilqehfxdnp & 23 & 2042 & 2065 & G15-P2782 \\
        & elxzcgotavskysutbwinaremjrilqehfodnp & 27 & 2041 & 2068 & G16-P2969 \\
        & elxrcgotaksvysutbwinazemjrilqehfodnp & 26 & 2041 & 2067 & G16-P2863 \\
        & elxrcgotavskysutbwinamezqriljehfodnp & 20 & 2040 & 2060 & G15-P2722 \\
        \hline
        \multirow{10}{*}{\rotatebox{90}{Sum Maximization}} 
        & elxrcgotavskysutbwinazemjrilqehfodnp & 27 & 2047 & 2074 & G15-P2717 \\
        & elxrcgotavqkysutbwinomezjrilsehfadnp & 23 & 2047 & 2070 & G14-P2636 \\
        & elxrcgotavskysutbwinamezjrilqehfodnp & 23 & 2047 & 2070 & G14-P2612 \\
        & elxrcgotavskysutbminazewjrilqehfodnp & 27 & 2046 & 2073 & G16-P3006 \\
        & elxrcgotavskysutbwinomezjrilqehfadnp & 23 & 2046 & 2069 & G13-P2417 \\
        & elxrcgotavqkysutbwinamezjrilsehfodnp & 23 & 2043 & 2066 & G15-P2812 \\
        & elorcgotavskysutbwinamezjrilqehfxdnp & 23 & 2042 & 2065 & G15-P2782 \\
        & elxzcgotavskysutbwinaremjrilqehfodnp & 27 & 2041 & 2068 & G16-P2969 \\
        & elxrcgotaksvysutbwinazemjrilqehfodnp & 26 & 2041 & 2067 & G16-P2863 \\
        & elxrcgotavskysutbwinamezqriljehfodnp & 20 & 2040 & 2060 & G15-P2722 \\
        \hline
    \end{tabular}
\end{table}

\begin{table}[htbp]
    \centering
    \caption{Best-first search with the `base' permutation as root}
    \begin{tabular}{|c|c|c|c|c|l|}
    		\cline{2-6}
    		\multicolumn{1}{c|}{}& \textbf{Permutation} & \textbf{Mono} & \textbf{Rainbow} & \textbf{Sum} & \textbf{Found at} \\
        \hline
        \multirow{10}{*}{\rotatebox{90}{Mono Maximization}}
        & tnvfdpeiqoiahkxlesnsjrlrwbcgmtouzyae & 225 & 1724 & 1949 & G236-P41396
\\
& tbvfdpeiqoiansjleshnzrlrwkcgmtouxyae & 225 & 1716 & 1941 & G263-P46186
\\
& hbvfdpeiqoiankjlestnxrlrwscgmtouzyae & 225 & 1716 & 1941 & G260-P45625
\\
& tbvfdpeixoiannqleshkjrlrwscgmtouzyae & 225 & 1716 & 1941 & G258-P45305
\\
& tkvfdpeijoiahnxlesnszrlrwbcgmtouqyae & 225 & 1716 & 1941 & G250-P43936
\\
& tkvgdpeiqoiahnjlmsnsxrlrwbcfetouzyae & 225 & 1716 & 1941 & G248-P43611
\\
& tkvgdpeuqoiahnjfesnsxrlrwbclmtoizyae & 225 & 1716 & 1941 & G242-P42524
\\
& tkvgdpeixoiahnqfesnsjrlrwbclmtouzyae & 225 & 1716 & 1941 & G241-P42361
\\
& tkvgdpeizoiahnxlesnsjrlrwbcfmtouqyae & 225 & 1716 & 1941 & G240-P42188
\\
& tnvfdpeiqoiahkjgesnsxrlrwbclmtouzyae & 225 & 1716 & 1941 & G238-P41838
\\
        \hline
        \multirow{10}{*}{\rotatebox{90}{Rainbow Maximization}}
& aioeqaluecrwnyszltpfbtnvdojsrmxeighk & 54 & 2226 & 2280 & G850-P148581
\\
& aioeqaluecrwdoszltpfbtnvnyjsrmxeighk & 54 & 2207 & 2261 & G858-P150024
\\
& aioeqaluezrwnyscltpfbtnvdojsrmxeighk & 54 & 2205 & 2259 & G851-P148833
\\
& aioeqaluecrwtyszlnpfbtnvdojsrmxeighk & 54 & 2204 & 2258 & G851-P148728
\\
& aioeqawuecrmnyszltpfbtnvlojsrdxeighk & 59 & 2202 & 2261 & G855-P149418
\\
& aioexaluecrwnyszltpfbtnvdojsrmheigqk & 54 & 2202 & 2256 & G852-P149023
\\
& aioeqaluezrwnyscltpfbtnvdojsrmheigxk & 54 & 2202 & 2256 & G852-P149009
\\
& aioeqaluecrwnyszltpfbtnvmojsrdheigxk & 55 & 2201 & 2256 & G850-P148595
\\
& aioeqawuecrxnyszltvfbtnplojsrdmeighk & 53 & 2201 & 2254 & G854-P149270
\\
& aioeqaduecrwnyszltpfbtnvlojsrmheigxk & 54 & 2200 & 2254 & G852-P148966
\\
        \hline
        \multirow{10}{*}{\rotatebox{90}{Sum Maximization}}
& eioaqaiuewtxhzlmnrfesjdcpokgvtbynsrl & 71 & 2225 & 2296 & G671-P117117
\\
& qioaeaiuewtzlyxmnrbesfcdpokgvthjnsrl & 66 & 2204 & 2270 & G825-P143881
\\
& eioaaqiuewtzhylmnrbesfcdpokgvtxjnsrl & 78 & 2192 & 2270 & G820-P143053
\\
& aioaeqiuewtxbylsnrfejcsdpokgvthznmrl & 60 & 2210 & 2270 & G790-P137990
\\
& qioaeaiuewtxbylmnrfejcsdpokgvthznsrl & 69 & 2201 & 2270 & G790-P137851
\\
& eioaaxiuewtjhzlmnrqesfdcpokgvtbynsrl & 70 & 2200 & 2270 & G780-P136263
\\
& aioaqeiuewtjhzlsnrfesxdcpovgktbynmrl & 62 & 2208 & 2270 & G767-P133896
\\
& aioajeiuewtqhylmnrfesxdcpokgvtbznsrl & 57 & 2213 & 2270 & G756-P132025
\\
& eioaaqiuewtjhylmnrxesfdcpokgvtbznsrl & 74 & 2196 & 2270 & G749-P130799
\\
& eioaaqiuewtjhzlsnrfesxdcpovgktbynmrl & 81 & 2189 & 2270 & G746-P130223
\\     
        \hline
    \end{tabular}
\end{table}

\begin{table}[htbp]
    \centering
    \caption{Best-first search with the `seed2K' permutation as root}
    \begin{tabular}{|c|c|c|c|c|l|}
    		\cline{2-6}
    		\multicolumn{1}{c|}{}& \textbf{Permutation} & \textbf{Mono} & \textbf{Rainbow} & \textbf{Sum} & \textbf{Found at} \\
        \hline
        \multirow{10}{*}{\rotatebox{90}{Mono Maximization}}
        & aexoaupfvnmbsrzwlneiqeyortjcdstlkhig & 222 & 1636 & 1858 & G335-P57992
\\
& aeqoaupfvnmbsrzhlneijeiortxcdgtlkwys & 222 & 1636 & 1858 & G327-P56698
\\
& aezoaupfvnmbsrjclneiqeyortxwdstlkhig & 222 & 1636 & 1858 & G321-P55631
\\
& aeqoaupfvcmbsrznlneijeyortxwdstlkhig & 222 & 1636 & 1858 & G320-P55475
\\
& aeqoiupfvhmbsrjclneizeaortxndstlkwyg & 222 & 1636 & 1858 & G317-P54941
\\
& aeqoiupfvwmbsrzclgeijeaortxndstlkhyn & 222 & 1636 & 1858 & G315-P54644
\\
& aexoaupfvhybsrznlneiqeiortjcdstlkwmg & 222 & 1636 & 1858 & G297-P51584
\\
& aezoaupfvnmbsrxhlneiqeiortjcdstlkwyg & 222 & 1636 & 1858 & G296-P51398
\\
& eezoaupfvhmbsrxnlnaiqeiortjcdstlkwyg & 222 & 1636 & 1858 & G296-P51355
\\
& aejoaupfvhmbsrzclneiqeiortxndstlkwyg & 222 & 1636 & 1858 & G289-P50219
\\
        \hline
        \multirow{10}{*}{\rotatebox{90}{Rainbow Maximization}}
& zeokuirmbvtnrsxdlhsfagwoncetqyljipea & 34 & 2200 & 2234 & G744-P130448
\\
& ziukoelbhvtmrsxdlnsfagwoncetyqrjipea & 65 & 2170 & 2235 & G795-P139326
\\
& zeupoilbhvtmrsndlxsjageoncetqyrfikwa & 43 & 2170 & 2213 & G772-P135431
\\
& zeokuirmbvtxrsndlhsfagwoncetqyljipea & 42 & 2170 & 2212 & G760-P133221
\\
& zeokuilmbvtnrsxdlhsfagwonyetcqrjipea & 38 & 2170 & 2208 & G753-P132053
\\
& zeokuirsmvtbrnqdlhljageoncetxysfipwa & 36 & 2170 & 2206 & G850-P148975
\\
& zeokuilmbvntrsqdlhsfagwoncetxyrjipea & 36 & 2170 & 2206 & G827-P145004
\\
& zeopuilmbvtnrshdlxsfakeoncetqyrjigwa & 35 & 2170 & 2205 & G765-P134191
\\
& zeokuilmbvtnrsqdlhsgafwoncetxyrjipea & 34 & 2170 & 2204 & G829-P145313
\\
& zeokuilmbvtnrsxdlhsjagwoncetqyrfipea & 34 & 2170 & 2204 & G746-P130875
\\
        \hline
        \multirow{10}{*}{\rotatebox{90}{Sum Maximization}}
        & yqoiaektdpvflnsjhrxaeouitegbwcrmlnzs & 104 & 2224 & 2328 & G839-P146807
\\
& qxoiaektdpvflhsrnjiaeouytegbwcrmlnzs & 110 & 2207 & 2317 & G845-P147867
\\
& xqoiaedtkpvflnsjhryaeouitegbwcrmlnzs & 90 & 2227 & 2317 & G842-P147388
\\
& qxoiaedtkpvflnsjhryaeouitegbwcrmlnzs & 96 & 2220 & 2316 & G852-P149164
\\
& yqoiaextdpvflnsjhrkaeouitegbwcrmlnzs & 104 & 2212 & 2316 & G838-P146628
\\
& yqoiaedtkpvflnrshjxaeouitegbwcrmznls & 89 & 2226 & 2315 & G857-P150038
\\
& xqoiaektdpvflnsnhryaeouitegbwcrmljzs & 104 & 2211 & 2315 & G849-P148683
\\
& eqoiayktdpvflhsjnrxaeouitegbwcrmlnzs & 103 & 2212 & 2315 & G841-P147157
\\
& yxoiaedtkpvflnrshjqaeouitegbwcrmlnzs & 98 & 2216 & 2314 & G854-P149466
\\
& qeoiayktdpvflnsrhjiaeouetxgbwcrmlnzs & 110 & 2204 & 2314 & G846-P148174
\\
        \hline
    \end{tabular}
\end{table}

\begin{table}[htbp]
    \centering
    \caption{Best-first search with the best permutations from random search as roots}
    \begin{tabular}{|c|c|c|c|c|l|}
    		\cline{2-6}
    		\multicolumn{1}{c|}{}& \textbf{Permutation} & \textbf{Mono} & \textbf{Rainbow} & \textbf{Sum} & \textbf{Found at} \\
        \hline
        \multirow{10}{*}{\rotatebox{90}{Mono Maximization}}
        & wkvpdgeyjseonmxrltsfcebioiqaauhlztrn & 217 & 1650 & 1867 & G28-P4948
\\
& hfvpdgeyjteoskxrltwmcebioiqaaunlzsrn & 217 & 1650 & 1867 & G297-P50568
\\
& nmvpdteyxseoskzrlnwfcebgoiqaauhljtri & 217 & 1650 & 1867 & G235-P40262
\\
& hfjpdgemzseoskvrltwycebioiqaaunlxtrn & 217 & 1650 & 1867 & G233-P39913
\\
& hkxtdgeyjpeosmvrltwfcebioiqaaunlzsrn & 217 & 1650 & 1867 & G226-P38689
\\
& hfvpdgeyxteoskzrltwmcebioiqaaunljsrn & 217 & 1650 & 1867 & G224-P38337
\\
& hkvpdieyxseowmzrltsfcebgoiqaaunljtrn & 217 & 1650 & 1867 & G221-P37848
\\
& hkvpdgeyjseonmzrlnsfcebioiqaauwlxtrt & 217 & 1650 & 1867 & G204-P35027
\\
& hkjpdieyzseonmvrltsfcebgoiqaauwlxtrn & 217 & 1650 & 1867 & G202-P34639
\\
& hfvpdtekztlosmxregwycebioiqaaunljsrn & 217 & 1650 & 1867 & G185-P31815
\\
        \hline
        \multirow{10}{*}{\rotatebox{90}{Rainbow Maximization}}
& lnhrvmodatpkiyojeeecisbgasuzfwxrqlnt & 42 & 2216 & 2258 & G603-P106256
\\
& lnzrptotadvkixoyeeecisbgasujfwqrhlnm & 31 & 2197 & 2228 & G845-P148320
\\
& lnzrvtotadfkiyojeeecisbgasuxpwqrhlnm & 33 & 2196 & 2229 & G836-P146854
\\
& lnhrvtodatpkixoyeeecisbgasujfwqrzlnm & 32 & 2196 & 2228 & G775-P136275
\\
& lnhrvmodztpkixoyeeecisbgasujfwqralnt & 32 & 2196 & 2228 & G761-P133682
\\
& lnhrvtodztpkijoseeeciybgasuxfwqralnm & 32 & 2196 & 2228 & G757-P133087
\\
& lnhrvmodztpkijoyeeecisbgasuxfwqralnt & 32 & 2196 & 2228 & G754-P132454
\\
& lnhrnmodatfkqyojeeecisbgasuxvwirzlpt & 31 & 2196 & 2227 & G780-P137079
\\
& lnhrfmodatpkqyojeeecisbgasuxvwirzlnt & 31 & 2196 & 2227 & G762-P133919
\\
& lnhrvmosatfkiyojeeecisbgaduxpwqrzlnt & 31 & 2196 & 2227 & G750-P131821
\\
        \hline
        \multirow{10}{*}{\rotatebox{90}{Sum Maximization}}
& dpbvwtqieouacgytexrnzhiljsnlorsfmkae & 89 & 2234 & 2323 & G483-P84941
\\
& dpbvwtoieiuacgytezrhjnolxsnlqrsfmkae & 95 & 2205 & 2300 & G854-P149887
\\
& dpbvwtioeiuacgytezrfhnxlqsnlorsjmkae & 94 & 2197 & 2291 & G851-P149505
\\
& dpbvwtioeiuacgytezrfhnqlxsnlorsjmkae & 91 & 2199 & 2290 & G847-P148802
\\
& dpbvwtioieuacgytezrfhnolxsnlqrsjmkae & 75 & 2215 & 2290 & G845-P148405
\\
& dpbvwtioieqacgmtezrxhnuljsnlorsyfkae & 78 & 2212 & 2290 & G822-P144360
\\
& dpbvwtioieuacgytezrfhnolqsnljrsxmkae & 79 & 2211 & 2290 & G818-P143646
\\
& dpbvwtioieuacgytezrfhnqljsnlorsxkmae & 80 & 2210 & 2290 & G816-P143318
\\
& dpbvwtioeiuacgzteyrfhnqljsnlorsxmkae & 100 & 2190 & 2290 & G811-P142401
\\
& vpbdwtioeiuacgytezrfhnqljsnlorsxmkae & 97 & 2193 & 2290 & G811-P142363
\\        
        \hline
    \end{tabular}
\end{table}

\begin{table}[htbp]
    \centering
    \caption{Greedy algorithm with the `base' permutation as root}
    \begin{tabular}{|c|c|c|c|c|l|}
    		\cline{2-6}
    		\multicolumn{1}{c|}{}& \textbf{Permutation} & \textbf{Mono} & \textbf{Rainbow} & \textbf{Sum} & \textbf{Found at} \\
        \hline
        \multirow{10}{*}{\rotatebox{90}{Mono Maximization}}
        & qreaahveoitojlnleszidbrpgfmkstxywncu & 201 & 1308 & 1509 & G484-P82115 \\
        & qreaahgeoitojlnleszidbrpxfmkstvywncu & 201 & 1299 & 1500 & G484-P82114 \\
        & qreaahzeoitojlnlesxidbrpgfmkstvywncu & 201 & 1371 & 1572 & G484-P82113 \\
        & qreaahjeoitoxlnleszidbrpgfmkstvywncu & 201 & 1350 & 1551 & G484-P82112 \\
        & vreaahxeoitojlnleszidbrpgfmkstqywncu & 201 & 1274 & 1475 & G484-P82111 \\
        & greaahxeoitojlnleszidbrpqfmkstvywncu & 201 & 1283 & 1484 & G484-P82110 \\
        & zreaahxeoitojlnlesqidbrpgfmkstvywncu & 201 & 1349 & 1550 & G484-P82109 \\
        & jreaahxeoitoqlnleszidbrpgfmkstvywncu & 201 & 1345 & 1546 & G484-P82108 \\
        & xreaahqeoitojlnleszidbrpgfmkstvywncu & 201 & 1358 & 1559 & G484-P82107 \\
        & qreaahxeoitojlnleszidbrpgfmkstvywncu & 201 & 1353 & 1554 & G483-P81912 \\
        \hline
        \multirow{10}{*}{\rotatebox{90}{Rainbow Maximization}}
        & oieauieoyqaetrnjlhvktdzpmslnrfgxcswb & 69 & 2317 & 2386 & G652-P114941 \\
        & qaeoiueiyeoabhslxrpdgvktfncmswnljtrz & 82 & 2315 & 2397 & G380-P67131 \\
        & oieauieoyqaetrnjlhkztdvpmslnrfgxcswb & 63 & 2314 & 2377 & G650-P114550 \\
        & qaeoiueiyeoabhslxrpdgvktfncmswnjltrz & 76 & 2314 & 2390 & G381-P67261 \\
        & oieauieoyqaetrnjlhvktdzpmslnrbgxcswf & 69 & 2313 & 2382 & G653-P115247 \\
        & qaeoiueijeoabhslxrpdgvktfncmswnlytrz & 82 & 2312 & 2394 & G379-P66870 \\
        & oieauieoyqaetrnjlhkvtdzpmslnrfgxcswb & 64 & 2310 & 2374 & G651-P114769 \\
        & oieauieoyqaetrnjlhzktdvpmslnrfgxcswb & 65 & 2310 & 2375 & G651-P114762 \\
        & oieauieoyqaetrnjlhvktdzplsmnrfgxcswb & 74 & 2309 & 2383 & G653-P115129 \\
        & oieauieoyqaetrnjlhzktdvplsmnrfgxcswb & 67 & 2309 & 2376 & G652-P114954 \\
        \hline
        \multirow{10}{*}{\rotatebox{90}{Sum Maximization}}
        & euiaoahoejkxswcfgmneqdtlbiypvtrzsrln & 56 & 2230 & 2286 & G439-P77233 \\
        & auiaoexoevkhmwcfgslejdtntiypzbnqsrlr & 54 & 2225 & 2279 & G711-P125136 \\
        & auiaoevoexkhfwcmgsdejltnpiytzbrqsnlr & 54 & 2225 & 2279 & G608-P106934 \\
        & euiaoahoefkxswcjgmneqdtlbiypvtrzsrln & 56 & 2222 & 2278 & G440-P77528 \\
        & auieoahoenklswcrgxbefqpndiytvmjzsltr & 60 & 2216 & 2276 & G190-P33634 \\
        & auiaoevoexkhmwcfgslejdtntiypzbnqsrlr & 55 & 2220 & 2275 & G712-P125192 \\
        & auiaoexoefkhmwcvgslejdtntiypzbnqsrlr & 54 & 2221 & 2275 & G710-P124855 \\
        & auiaoefoezkhvwcmgsdejltnpiytxbrqsnlr & 54 & 2221 & 2275 & G656-P115438 \\
        & auiaoefoexkhvwcmgsdejltnpiytzbrqsnlr & 54 & 2221 & 2275 & G609-P107185 \\
        & auiaoexoevkhfwcmgsdejltnpiytzbrqsnlr & 55 & 2220 & 2275 & G607-P106774 \\
        \hline
    \end{tabular}
\end{table}

\begin{table}[htbp]
    \centering
    \caption{Greedy algorithm with the `Seed2K' permutation as root}
    \begin{tabular}{|c|c|c|c|c|l|}
        \cline{2-6}
        \multicolumn{1}{c|}{}&\textbf{Permutation} & \textbf{Mono} & \textbf{Rainbow} & \textbf{Sum} & \textbf{Found at} \\
        \hline
        \multirow{10}{*}{\rotatebox{90}{Mono Maximization}}
        & asouadttgjyrsefclbrivkieemxqwnplhzno & 203 & 1343 & 1546 & G88-P15398 \\
        & asouadrtgjyrsefclbeivkietmxqwnplhzno & 203 & 1265 & 1468 & G88-P15397 \\
        & asouadstgjyreefclbrivkietmxqwnplhzno & 203 & 1287 & 1490 & G88-P15396 \\
        & psouadetgjyrsefclbrivkietmxqwnalhzno & 203 & 1256 & 1459 & G88-P15395 \\
        & tsouadetgjyrsefclbrivkieamxqwnplhzno & 203 & 1305 & 1508 & G88-P15394 \\
        & rsouadetgjyrsefclbaivkietmxqwnplhzno & 203 & 1322 & 1525 & G88-P15393 \\
        & ssouadetgjyraefclbrivkietmxqwnplhzno & 203 & 1256 & 1459 & G88-P15392 \\
        & esouadatgjyrsefclbrivkietmxqwnplhzno & 203 & 1306 & 1509 & G88-P15391 \\
        & asouadetgjyrsefclbrikvietmxqwnplhzno & 203 & 1336 & 1539 & G88-P15356 \\
        & asouadetgjyrsefclbrivkietmxqwnplhzno & 203 & 1324 & 1527 & G87-P15209 \\
        \hline
        \multirow{10}{*}{\rotatebox{90}{Rainbow Maximization}}
        & svogcelzutwardahsjxtiymenpebfqrlikno & 6 & 2319 & 2325 & G593-P104536 \\
        & svogcelzutwardahsqxtiymenpebfjrlikno & 6 & 2317 & 2323 & G592-P104197 \\
        & sxogcezvutwardahsjltiymenpebfqrlikno & 6 & 2314 & 2320 & G602-P106107 \\
        & svogcexzutwardahsjltiymenpebfqrlikno & 6 & 2314 & 2320 & G594-P104636 \\
        & sxogcezvutwardahsqltiymenpebfjrlikno & 6 & 2313 & 2319 & G601-P105865 \\
        & svogcexzutwardahsqltiymenpebfqrliyno & 6 & 2313 & 2319 & G593-P104460 \\
        & svogcelzutwardahsjxtikmenpebfqrliyno & 6 & 2312 & 2318 & G594-P104686 \\
        & szogcexvutwardahsjltiymenpebfqrlikno & 6 & 2311 & 2317 & G595-P104819 \\
        & svogcelzutwardahsqxtikmenpebfjrliyno & 6 & 2311 & 2317 & G593-P104510 \\
        & szogcezvutwardahsqltiymenpebfjrlikno & 6 & 2310 & 2316 & G596-P105061 \\
        \hline
        \multirow{10}{*}{\rotatebox{90}{Sum Maximization}}
        & wozeuakycgtelsdjnhiieoaqmbfpvtrnslxr & 94 & 2266 & 2360 & G277-P48954 \\
        & wozeuakycgtelsdjnhqieoaimbfpvtrnslxr & 106 & 2253 & 2359 & G278-P49112 \\
        & wozeuakcygtelsdjnhqieoaimbtpvfrnslxr & 94 & 2262 & 2356 & G280-P49482 \\
        & wozeuagkcytelsdqnhjieoaimtfpvbrlsnxr & 94 & 2261 & 2355 & G315-P55633 \\
        & koyeuawczgtelsdjnhoieqaimbtpvfrnslxr & 90 & 2265 & 2355 & G285-P50347 \\
        & wozeuakcygtelsdjnhiieoaqmbtpvfrnslxr & 85 & 2270 & 2355 & G281-P49638 \\
        & zoweuakycgtelsdjnhiieoaqmtfpvbrnslxr & 91 & 2264 & 2355 & G277-P48889 \\
        & wozeuakycgtelsdjnhiieoaqmtfpvbrnslxr & 88 & 2266 & 2354 & G276-P48798 \\
        & kozeuawycgtelsdjnhiieoaqmtfpvbrnslxr & 88 & 2265 & 2353 & G275-P48585 \\
        & wozeuakcygtelsdqnhjieoaimbtpvfrnslxr & 93 & 2259 & 2352 & G289-P51082 \\
        \hline
    \end{tabular}
\end{table}

\begin{table}[htbp]
    \centering
    \caption{Greedy algorithm with the best permutations from random search}
    \begin{tabular}{|c|c|c|c|c|l|}
        \cline{2-6}
        \multicolumn{1}{c|}{}&\textbf{Permutation} & \textbf{Mono} & \textbf{Rainbow} & \textbf{Sum} & \textbf{Found at} \\
        \hline
        \multirow{10}{*}{\rotatebox{90}{Mono Maximization}}
        & omvsgrsiftjbtrkeuenezlxwiayaholdqpcn & 213 & 1318 & 1531 & G39-P6916 \\
        & omvsgrniftjbtrkeueiezlxwsayaholdqpcn & 213 & 1271 & 1484 & G39-P6915 \\
        & omvsgrtiftjbirkeuenezlxwsayaholdqpcn & 213 & 1238 & 1451 & G39-P6914 \\
        & lmvsgriiftjbtrkeuenezlxwsayahoodqpcn & 213 & 1260 & 1473 & G39-P6913 \\
        & smvsgriiftjbtrkeuenezlxwoayaholdqpcn & 213 & 1222 & 1435 & G39-P6912 \\
        & nmvsgriiftjbtrkeueoezlxwsayaholdqpcn & 213 & 1287 & 1500 & G39-P6911 \\
        & tmvsgriiftjborkeuenezlxwsayaholdqpcn & 213 & 1299 & 1512 & G39-P6910 \\
        & imvsgroiftjbtrkeuenezlxwsayaholdqpcn & 213 & 1280 & 1493 & G39-P6909 \\
        & omvsgriiftjbtrkeuenexlzwsayaholdqpcn & 213 & 1249 & 1462 & G39-P6875 \\
        & omvsgriiftjbtrkeuenezlxwsayaholdqpcn & 213 & 1251 & 1464 & G38-P6674 \\
        \hline
        \multirow{10}{*}{\rotatebox{90}{Rainbow Maximization}}
        & afuwvkjlernhqnostdesicytemagpxorizbl & 5 & 2305 & 2310 & G513-P90969 \\
        & anuwvkjlernhqfastdesicytemogpxorizbl & 5 & 2303 & 2308 & G509-P90286 \\
        & anuwvkjlernhqfastdemicytesogpxorizbl & 5 & 2302 & 2307 & G508-P90138 \\
        & afuwvkqlernhjnostdesicytemagpxorizbl & 5 & 2301 & 2306 & G512-P90802 \\
        & anuwvkqlernhjfastdesicytemogpxorizbl & 5 & 2298 & 2303 & G510-P90441 \\
        & anuwvkqlernhjfastdemicytesogpxorizbl & 5 & 2298 & 2303 & G509-P90265 \\
        & anuwvkxlernhqjostdesicytemagpforizbl & 9 & 2296 & 2305 & G524-P92906 \\
        & amuwvkjlernhqnostdesicytefagpxorizbl & 5 & 2295 & 2300 & G514-P91155 \\
        & anuwvkjlernhqfostdesicytemagpxorizbl & 5 & 2295 & 2300 & G510-P90474 \\
        & anuwvkjlernhqfastdemigytesocpxorizbl & 5 & 2295 & 2300 & G507-P89949 \\
        \hline
        \multirow{10}{*}{\rotatebox{90}{Sum Maximization}}
        & rlsmxniaoeuibptvwfhnlkoryzdteceqjgas & 73 & 2258 & 2331 & G666-P117592 \\
        & rlsmxniaoeuifptvwbhnlkoryzdteceqjgas & 77 & 2252 & 2329 & G667-P117800 \\
        & rlsmzniaoeuibptvwfhnlkorxydteceqjgas & 72 & 2252 & 2324 & G606-P107054 \\
        & rlsmzniaoeuibptvwfhnlkoryxdteceqjgas & 70 & 2253 & 2323 & G605-P106851 \\
        & rlsmzniaoeuifptvwbhnlkoryxdteceqjgas & 74 & 2249 & 2323 & G604-P106707 \\
        & nrsmzlaioeuipftvwbhnlkordcjteyqsxgae & 64 & 2259 & 2323 & G506-P89471 \\
        & rlsmxniaoeuibptvwfhnlkoryqdtecezjgas & 73 & 2249 & 2322 & G667-P117884 \\
        & rlsmxniaoeuibptvwfhnlkoryjdteceqzgas & 70 & 2252 & 2322 & G649-P114620 \\
        & nrsmzlaioeuipftvwbhnlkordcxteyqsjgae & 64 & 2258 & 2322 & G507-P89720 \\
        & nrsmzlaioeuipftvwbhnlkordjcdteyqsxgae & 64 & 2258 & 2322 &505-P89312 \\
        \hline
    \end{tabular}
\end{table}

\clearpage

\section*{Acknowledgments}
I would like to thank my mentor, Dr. Shahrzad Jamshidi for all the support she has given me during my time at Lake Forest College, especially in this project.

\bibliographystyle{siamplain}
\bibliography{references}

\end{document}